\documentclass[11pt,letterpaper]{article}
\usepackage[margin=1in]{geometry}
\usepackage[T1]{fontenc}
\usepackage[utf8]{inputenc}
\usepackage{mathptmx}
\usepackage{courier}
\usepackage{amsmath,amssymb,amsthm,bm,mathtools}
\usepackage{booktabs,tabularx,array}
\usepackage{graphicx}
\usepackage{microtype}
\usepackage[round]{natbib}

\usepackage{fancyhdr}
\usepackage{caption}
\usepackage{xcolor}
\usepackage[colorlinks=true,citecolor=blue!40!black,linkcolor=blue!40!black,urlcolor=blue!40!black]{hyperref}
\usepackage{float}
\IfFileExists{placeins.sty}{\usepackage[section]{placeins}}{\providecommand{\FloatBarrier}{}}
\newcommand{\E}{\mathbb{E}}
\newcommand{\Pp}{\mathbb{P}}
\newcommand{\CC}{\mathbb{C}}
\newcommand{\RR}{\mathbb{R}}

\newcommand{\HCN}{\mathrm{HCN}}
\newcommand{\NN}{\mathrm{NN4MGF}}
\newcommand{\MRE}{\mathrm{MRE}}

\newtheorem{theorem}{Theorem}
\newtheorem{proposition}{Proposition}
\newtheorem{lemma}{Lemma}[section]

\hypersetup{pdfauthor={Kehan Wang},pdftitle={Holomorphic Cauchy Networks for Efficient Stationary Tail Computation in High-Dimensional Reflected Brownian Motions}}

\begin{document}
\thispagestyle{plain}
\begin{center}
 {\large\bfseries Holomorphic Cauchy Networks for Efficient\\[3pt] Stationary Tail Computation in High-Dimensional\\[3pt] Reflected Brownian Motions\par}
 \vspace{10pt}
 {Kehan Wang\par}
 {\small School of Data Science, The Chinese University of Hong Kong, Shenzhen, China\par}
\end{center}
\vspace{7pt}
\begin{abstract}
Stationary tail probabilities quantify severe congestion and workload risks in stochastic networks, yet their computation for high-dimensional reflected Brownian motions (RBMs) remains challenging. Building on the shared-coordinate, basic-adjoint-relationship (BAR) transform-learning framework of NN4MGF, we develop Holomorphic Cauchy Networks (HCNs) to move holomorphicity from the loss function into the network architecture. Rather than coupling real and imaginary output heads through a sampled Cauchy--Riemann (CR) penalty, HCN embeds rational Cauchy activations into native complex-valued subnetworks that are holomorphic by construction on pole-free domains, avoiding the input-derivative graphs required by the CR loss. Exact origin anchors and boundary-coordinate exclusion enforce normalization and boundary-independence identities algebraically, anchored interaction subnetworks extend the coordinate-wise representation to correlated systems, and closed-form real-axis derivatives with sequential branch backpropagation further control training memory. Under holomorphic-log assumptions, we establish simultaneous value and first-derivative approximation for separable targets with the shared-coordinate representation and for general constrained holomorphic targets with the interaction extension, while deriving a mixed-derivative lower bound showing why coordinate-only representations incur an irreducible error under nonzero cross-covariance. In a 30-dimensional profile on the same NVIDIA H100 NVL, the tested HCN configuration achieves a 9.7-fold reduction in mean update time and approximately tenfold lower peak allocated memory than NN4MGF; extrapolating the measured update times to the respective scheduled update counts gives a projected full-schedule update-time ratio of 58.2. Using 64 million sampled base inputs per high-dimensional run, all HCN experiments can be completed on a single GPU and scale to 50 dimensions on a single laptop GPU with 6.367\% mean relative error over thresholds 1 through 8. Across degree-matched comparisons over the primary evaluation ranges, HCN attains lower mean relative errors than NN4MGF, reducing error from 6.684\% to 1.422\% on the two-dimensional Harrison non-product benchmark. An exploratory ten-dimensional correlated RBM example further illustrates the interaction extension beyond two dimensions. Code is available at \url{https://github.com/KehanWang126/holomorphic-cauchy-networks-rbm}.

\end{abstract}
\vspace{4pt}
\noindent\textbf{Keywords:} reflected Brownian motion; stationary distributions; Laplace-transform inversion; holomorphic neural networks; scientific machine learning.
\vspace{7pt}
\section{Introduction}\label{sec:intro}
Reflected Brownian motions (RBMs) serve as diffusion models of congestion and workload dynamics in stochastic networks. Their stationary tail probabilities quantify the risk of large workloads and therefore add important risk information beyond mean-based performance measures. Explicit stationary laws are available only for special model classes, including product-form systems characterized by structural skew-symmetry conditions on the model parameters \citep{HarrisonWilliams1987,DaiMiyazawaWu2014}. For more general models, numerical approaches include methods based on the basic adjoint relationship (BAR) \citep{DaiHarrison1992} and high-dimensional steady-state simulation \citep{BlanchetEtAl2021}. Neural computation has also proved useful for high-dimensional RBM drift control \citep{AtaHarrisonSi2025}, for learning convergence rates and stationary distributions of Markov chains \citep{QuBlanchetGlynn2024,QuBlanchetGlynn2026}, and more broadly in equation-residual scientific machine learning \citep{RaissiEtAl2019}. As dimension increases, obtaining accurate stationary tail estimates at a manageable computational cost remains a central challenge.

A recent approach by \citet{DaiZhang2026}, NN4MGF, learns stationary and boundary Laplace transforms from the BAR and recovers tail probabilities through numerical Laplace inversion. This formulation allows neural networks to be trained directly from the RBM parameters $(\mu,\Sigma,R)$ without requiring samples from the stationary distribution. As noted by \citet{DaiZhang2026}, however, GPU memory demand and training time remain key obstacles to further scaling: processing large training batches in a single update requires substantial GPU memory, whereas splitting them across smaller batches increases runtime. Our aim is to address these structural and computational bottlenecks---substantially reducing GPU memory demand, update time, and sampling requirements while maintaining or improving tail accuracy---so that BAR-based transform learning scales to higher dimensions on modest hardware and extends to correlated non-product systems.

The complex-analytic structure of Laplace transforms offers a natural way to relieve this computational bottleneck. Numerical Laplace inversion evaluates transforms at complex arguments, where the true stationary and boundary transforms are holomorphic throughout the interior of their convergence domains. Within standard real-valued deep learning frameworks, \citet{DaiZhang2026} map the real and imaginary parts of each coordinate through Fourier features into two-headed real-valued networks $(g^{\rm Re}, g^{\rm Im})$ to approximate the real and imaginary parts of the log transform. Because two real-valued outputs do not automatically form a complex-differentiable pair, NN4MGF couples them during training via a sampled Cauchy--Riemann (CR) penalty. Treating holomorphicity as a training-time differential constraint is effective, but it requires computing first-order input derivatives and retaining their computational graphs in GPU memory during parameter backpropagation, adding memory overhead as dimension increases.

A natural alternative is to move holomorphicity from the loss function into the network architecture itself. Recently, \citet{LiXiaZhang2025} introduced a rational Cauchy activation function motivated by Cauchy's integral formula for real-function approximation, and \citet{SiEtAl2026} demonstrated its effectiveness in physics-informed neural networks for real-valued PDEs. Although both works operate on real-valued inputs, the rational form itself is naturally suited to the complex domain: when evaluated directly on complex affine transformations in $\CC$, the Cauchy activation $\sigma_r(z) = (a_r z + b_r)/(z^2 + s_r^2)$ is complex-differentiable wherever its denominator is nonzero. Building on this observation, we construct native complex-valued Cauchy subnetworks and embed them into NN4MGF's shared-coordinate BAR framework to form Holomorphic Cauchy Networks (HCNs). Because HCN satisfies the Cauchy--Riemann equations identically by construction on pole-free domains, this design replaces separate real and imaginary output heads and Fourier feature encodings while removing the sampled CR penalty, thereby avoiding the input-derivative graphs required by the CR loss and preserving the strengths of the BAR-learning formulation.

Building on NN4MGF's parameter-sharing mechanism across coordinates, HCN incorporates several structural and computational refinements for the interior and boundary transforms. Additive coordinate blocks provide an economical representation of separable log transforms, while anchored interaction subnetworks extend the architecture to dependent systems. Exact origin anchoring enforces the known interior normalization and boundary masses, and boundary-coordinate exclusion ensures that each boundary representation is independent of the coordinate normal to its supporting face. These identities hold algebraically by construction rather than through additional soft penalty terms, and evaluating isolated boundary residuals directly in transform space avoids taking complex logarithms of BAR coefficients. To further control memory usage during training, we employ closed-form input derivatives for the real-axis constraints, sequential backpropagation across the BAR, boundary, and real-axis loss branches, and dimension-dependent batch schedules.

We establish approximation properties for the resulting constrained architecture (Theorem~\ref{thm:approx}). Under holomorphic-log assumptions, the shared-coordinate representation can simultaneously approximate function values and first derivatives of separable targets on compact polydiscs, while the anchored interaction extension approximates general holomorphic targets satisfying the prescribed origin and boundary-coordinate constraints. The construction preserves parameter sharing and the exact structural identities with a uniform denominator margin. Conversely, we derive a mixed-derivative lower bound showing that whenever stationary coordinates have nonzero cross-covariance, coordinate-only representations incur an irreducible uniform approximation error on the joint log transform that increasing width alone cannot remove.

Our numerical experiments evaluate the proposed framework across computational efficiency, hardware accessibility, and tail accuracy. In a 30-dimensional profile on the same NVIDIA H100 NVL, the tested HCN configuration achieves a 9.7-fold reduction in mean update time and approximately tenfold lower peak allocated memory compared with NN4MGF; extrapolating the measured update times to the respective scheduled update counts gives a projected full-schedule update-time ratio of 58.2. These ratios compare the complete computational configurations---including their architectures, batch sizes, derivative calculations, and graph schedules---rather than isolating the effect of CR penalty removal. Using 64 million sampled base inputs in each high-dimensional run, all HCN experiments can be completed on a single GPU and scale to 50 dimensions on a single laptop GPU, achieving 6.367\% mean relative error over thresholds 1 through 8 at 50 dimensions with comparable accuracy across final-stage checkpoints. Across the degree-matched comparisons, HCN attains lower reported mean relative errors than NN4MGF over the primary evaluation ranges, reducing error from 6.684\% to 1.422\% on the two-dimensional Harrison non-product benchmark, while an exploratory ten-dimensional correlated example illustrates the interaction extension in higher dimensions. Exact-transform inversion diagnostics and a complex-tanh control further motivate Degree~8 Fixed-Talbot inversion and the rational Cauchy activation.

Section~\ref{sec:rbm} introduces the RBM transform problem, Section~\ref{sec:method} develops HCN and its approximation theorem, and Section~\ref{sec:training} details the memory-controlled training computations. Section~\ref{sec:experiments} reports the numerical experiments, and Sections~\ref{sec:discussion}--\ref{sec:conclusion} discuss scope and conclusions. Proofs, implementation details, and additional diagnostics are provided in the appendices, and code is available at \url{https://github.com/KehanWang126/holomorphic-cauchy-networks-rbm}.

\section{RBM Stationary Tails and Transform Learning}\label{sec:rbm}
\subsection{The stationary tail problem}
A $d$-dimensional RBM in $\mathbb{R}_+^d$ satisfies
\begin{equation}\label{eq:rbm}
Z(t)=Z(0)+\mu t+B(t)+RY(t),\qquad t\ge0,
\end{equation}
where $\mu\in\RR^d$ is the drift vector, $B$ is a $d$-dimensional zero-mean Brownian motion with covariance matrix $t\Sigma$, and the columns $R^{(k)}$ of the $d\times d$ matrix $R$ specify the reflection directions on the boundary faces of the orthant. Each continuous regulator process $Y_k$ starts at zero, is nondecreasing, and increases only when the state hits the $k$th boundary face $Z_k=0$, satisfying $\int_0^T Z_k(u)\,dY_k(u)=0$ for all $T\ge0$.

We assume that $\Sigma$ is symmetric positive definite, $R$ is nonsingular with componentwise positive vector $-R^{-1}\mu$, and the RBM is well defined with a unique stationary distribution $\pi$ \citep{HarrisonWilliams1987}. Under stationary initialization $Z(0)\sim\pi$, the state vector has finite first moments and the regulator process has finite expected increments over unit time intervals. Our primary performance metric is the stationary tail probability of the total system workload:
\begin{equation}\label{eq:tail}
S=\bm{1}^{\mathsf T} Z(0),\qquad p(t)=\Pp_{\pi}(S>t),\qquad t\ge0.
\end{equation}
Our objective is to compute $p(t)$ directly from the system primitives $(\mu,\Sigma,R)$ without generating sample paths from $\pi$.

\subsection{Transforms and the basic adjoint relationship}
For a complex vector $\theta\in\CC^d$, define the interior and boundary Laplace transforms as
\begin{align}
\phi_0(\theta)&=\E_\pi\bigl[e^{-\theta^{\mathsf T}Z(0)}\bigr],\label{eq:phi0}\\
\phi_k(\theta)&=\E_\pi\biggl[\int_0^1 e^{-\theta^{\mathsf T}Z(u)}\,dY_k(u)\biggr],\qquad k=1,\ldots,d,\label{eq:phik}
\end{align}
where complex inner products use ordinary transposes without conjugation. These transforms are finite whenever $\Re\theta$ is componentwise nonnegative and are holomorphic throughout the interior of their common domain of absolute convergence. Each $\phi_k$ represents the Laplace transform of a finite boundary measure supported on the face $\{z\in\RR_+^d:z_k=0\}$, which renders $\phi_k(\theta)$ structurally independent of the $k$th coordinate $\theta_k$.

Stationarity of \eqref{eq:rbm} directly determines the transform values at the origin:
\begin{equation}\label{eq:mass}
m=\E_\pi[Y(1)]=-R^{-1}\mu,\qquad\phi_0(0)=1,\qquad\phi_k(0)=m_k,\qquad k=1,\ldots,d.
\end{equation}
Applying It\^o's formula to the test function $e^{-\theta^{\mathsf T}z}$ and taking expectations under $\pi$ yields the Laplace transform version of the basic adjoint relationship (BAR):
\begin{equation}\label{eq:bar}
\gamma_0(\theta)\phi_0(\theta)=\sum_{k=1}^d\gamma_k(\theta)\phi_k(\theta),
\end{equation}
with known quadratic and linear polynomial coefficients
\begin{equation}\label{eq:gammas}
\gamma_0(\theta)=-\tfrac12\theta^{\mathsf T}\Sigma\theta+\mu^{\mathsf T}\theta,\qquad
\gamma_k(\theta)=-\theta^{\mathsf T}R^{(k)},\qquad k=1,\ldots,d.
\end{equation}
The BAR identity \eqref{eq:bar} holds throughout the common transform domain and its analytic continuation (see Appendix~\ref{app:bar-derivation} for the complete derivation) and couples the $d+1$ unknown holomorphic functions $(\phi_0,\phi_1,\ldots,\phi_d)$ solely through the model parameters $(\mu,\Sigma,R)$.

\subsection{From learned transforms to tails}
Once the interior transform $\phi_0(\theta)$ is learned, restricting it to the complex diagonal gives the Laplace transform of the total workload $S$, namely $\phi_S(s)=\phi_0(s\bm{1})$. Integrating the survival function $p(t)=\Pp_\pi(S>t)$ yields the tail Laplace transform
\begin{equation}\label{eq:tail-lt}
G_S(s)=\int_0^\infty e^{-st}p(t)\,dt=\frac{1-\phi_0(s\bm{1})}{s},\qquad \Re s>0.
\end{equation}
Following \citet{DaiZhang2026}, we parameterize the log transforms, train them to satisfy the BAR and structural constraints, and recover tail probabilities by applying numerical Laplace inversion \citep{AbateWhitt1995}---specifically the Fixed-Talbot algorithm \citep{AbateValko2004,AbateWhitt2006}---to the learned tail transform
\begin{equation}\label{eq:learned-lt}
\widehat{G}_S(s)=\frac{1-\widehat\phi_0(s\bm{1})}{s}.
\end{equation}
Because Fixed-Talbot inversion evaluates $\widehat{G}_S(s)$ at complex quadrature nodes along a deformed contour in the complex plane, accurate tail recovery requires a representation that maintains analyticity and numerical stability across complex arguments, paired with an inversion degree suited to the system dimension.

\section{Holomorphic Cauchy Networks}\label{sec:method}
Following the shared-coordinate log-transform formulation of \citet{DaiZhang2026}, we parameterize the interior and boundary Laplace transforms as
\begin{equation}\label{eq:log-rep}
\widehat\phi_k(\theta)=\exp(f_k(\theta)),\qquad k=0,\ldots,d,
\end{equation}
which stabilizes evaluations across regions where transform magnitudes vary exponentially. HCN constructs $f_k(\theta)$ using native complex Cauchy subnetworks, origin anchors, and boundary-coordinate exclusion. These constructions enforce the prescribed transform values at the origin and boundary-coordinate independence exactly, while ensuring holomorphicity on pole-free domains. The BAR and real-axis constraints are imposed through the training loss.

\subsection{Complex Cauchy representation}
For each hidden unit $r=1,\ldots,H$, we extend the rational Cauchy activation of \citet{LiXiaZhang2025} and \citet{SiEtAl2026} to complex inputs and complex coefficients:
\begin{equation}\label{eq:cauchy}
\sigma_r(z)=\frac{a_r z+b_r}{z^2+s_r^2},\qquad s_r=\operatorname{softplus}(\rho_r)+s_{\min},
\end{equation}
with trainable complex coefficients $a_r,b_r\in\CC$, trainable scale parameter $\rho_r\in\RR$, and fixed floor $s_{\min}>0$. Each scalar subnetwork on $\CC^p$ takes the form
\begin{equation}\label{eq:block}
q_\omega(x)=w_2^{\mathsf T}\sigma(W_1x+c_1)+c_2,\qquad x\in\CC^p,
\end{equation}
where $W_1\in\CC^{H\times p}$, $c_1,w_2\in\CC^H$, and $c_2\in\CC$ are complex weights and biases, and $w_2^{\mathsf T}$ uses the ordinary transpose without complex conjugation. Trainable complex embeddings identify coordinates and boundary faces and remain constant with respect to $\theta$, while positive $s_r\ge s_{\min}$ regulates the locations of the simple poles at $z=\pm\mathrm{i}s_r$.

\begin{proposition}[Structural holomorphicity]\label{prop:holo}
Fix network parameters for which all anchor evaluations are well defined, and let $U\subset\CC^d$ be an open domain on which every Cauchy denominator appearing in $f_0,\ldots,f_d$ is nonzero. Then $f_0,\ldots,f_d$ and their exponentials $\widehat\phi_0,\ldots,\widehat\phi_d$ are holomorphic throughout $U$.
\end{proposition}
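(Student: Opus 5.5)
The plan is to build holomorphicity up compositionally, following the architecture of $f_k$ from its pieces. The pieces are: the Cauchy activation \eqref{eq:cauchy}, the scalar subnetworks \eqref{eq:block}, the constant embeddings, the origin anchors, and boundary-coordinate exclusion. At each stage we use only the closure of holomorphic functions on an open set under sums, products, constant multiples, composition, and quotients with nonvanishing denominators.

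\emph{Step 1: the activation.} For fixed parameters, $\sigma_r(z)=(a_rz+b_r)/(z^2+s_r^2)$ is a quotient of polynomials in $z$. It is therefore holomorphic on $\CC\setminus\{\pm\mathrm{i}s_r\}$, which is exactly where its denominator is nonzero.

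\emph{Step 2: one subnetwork.} Each preactivation $z_r(x)=(W_1x+c_1)_r$ is an affine, hence entire, function of $x\in\CC^p$. The input $x$ is itself an affine function of $\theta$: it consists of selected coordinates of $\theta$ together with trainable embeddings, and the embeddings are constant in $\theta$. So $\theta\mapsto z_r$ is entire. On the open set where $z_r^2+s_r^2\neq0$, the composition $\sigma_r\circ z_r$ is holomorphic in $\theta$. A function of several complex variables is holomorphic when it is continuously differentiable and holomorphic in each variable separately, or equivalently when it satisfies the Cauchy--Riemann equations in each $\theta_j$. Both properties are preserved under composition of holomorphic maps, and the composition here is holomorphic in each variable by the one-variable chain rule. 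Finally, $q_\omega=w_2^{\mathsf T}\sigma(\cdot)+c_2$ is a finite $\CC$-linear combination plus a constant, so it is holomorphic wherever every denominator is nonzero.

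\emph{Step 3: assembling $f_k$.} Each $f_k$ is a finite sum of such subnetwork evaluations: coordinate blocks, interaction blocks, and boundary blocks with $\theta_k$ excluded. Excluding a coordinate just means the subnetwork is composed with a coordinate projection, which is linear and so preserves holomorphicity. An origin anchor subtracts the same network evaluated at $\theta=0$ (or at a partially zeroed argument) and adds a target constant such as $\log m_k$. The hypothesis that all anchor evaluations are well defined makes each fully anchored term a finite complex constant, so subtracting it preserves holomorphicity. If an anchor evaluates a subnetwork at a partially zeroed point, that term is again a network composed with a linear map of $\theta$. Its denominators appear among ``every Cauchy denominator appearing in $f_0,\ldots,f_d$,'' so by hypothesis they are nonzero on $U$.

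\emph{Step 4: exponentials.} Since $\exp$ is entire, $\widehat\phi_k=\exp\circ f_k$ is holomorphic on $U$ by composition.

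The main obstacle is bookkeeping rather than analysis. We must make sure that every denominator in the final expression, including those inside anchor terms and interaction terms, is covered by the hypothesis on $U$. We also need all parameter-dependent quantities other than $\theta$ to be treated as constants. The most delicate point is the positivity reparametrization $s_r=\operatorname{softplus}(\rho_r)+s_{\min}$: it is real and involves no $\theta$, so no non-holomorphic operation such as conjugation or modulus ever enters the map $\theta\mapsto f_k(\theta)$. I would state this explicitly by listing the operations used in the forward pass and checking that each one is holomorphic in $\theta$.
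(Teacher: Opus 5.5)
Your proposal is correct and follows the same compositional argument as the paper. Rational Cauchy activations are holomorphic off their poles, and complex affine maps, finite sums, coordinate restriction, subtraction of well-defined constant anchors, and exponentiation all preserve holomorphicity on $U$; your version simply makes explicit the bookkeeping the paper's one-line proof leaves implicit, namely the $\theta$-independent embeddings, the real scale $s_r$, and the coverage of every denominator.
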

\begin{proof}
Complex affine maps and pole-free rational activations $\sigma_r(z)$ are holomorphic in $z$. Composition, finite summation, coordinate restriction, subtraction of well-defined anchors, and complex exponentiation preserve holomorphicity, so $f_k$ and $\widehat\phi_k=\exp(f_k)$ satisfy the Cauchy--Riemann identities on $U$.
\end{proof}
By Proposition~\ref{prop:holo}, HCN satisfies the Cauchy--Riemann equations on every pole-free open set by construction, removing the sampled Cauchy--Riemann penalty $\mathcal L_{\rm CR}$ and the computational graphs required to differentiate that penalty with respect to network parameters.

\subsection{Shared coordinate representations and interactions}
Let $e_j\in\CC^{p_e}$ and $b_k\in\CC^{p_b}$ denote trainable complex embedding vectors for coordinate $j\in\{1,\ldots,d\}$ and boundary face $k\in\{1,\ldots,d\}$, respectively. Separate shared subnetworks $q_{\rm int}$ and $q_{\rm bd}$ give the coordinate representations:
\begin{align}
 f_0^{\rm coord}(\theta)&=\sum_{j=1}^d\bigl[q_{\rm int}(\theta_j,e_j)-q_{\rm int}(0,e_j)\bigr],\label{eq:int-coord}\\
 f_k^{\rm coord}(\theta)&=\log m_k+\sum_{j\ne k}\bigl[q_{\rm bd}(\theta_j,e_j,b_k)-q_{\rm bd}(0,e_j,b_k)\bigr],\qquad k=1,\ldots,d,\label{eq:bd-coord}
\end{align}
where arguments denote concatenation and $m=-R^{-1}\mu$ is the known boundary-mass vector from \eqref{eq:mass}. Anchoring at zero and excluding $\theta_k$ from the $k$th boundary representation imply
\begin{equation}\label{eq:identities}
\widehat\phi_0(0)=1,\qquad\widehat\phi_k(0)=m_k,\qquad\partial_{\theta_k}f_k(\theta)=0,\qquad k=1,\ldots,d.
\end{equation}
The origin anchors enforce the prescribed interior and boundary transform masses, while exclusion of $\theta_k$ enforces normal-coordinate independence for the $k$th boundary representation. These identities hold wherever the network evaluations are well defined and require no additional penalty terms.

The coordinate interior representation is separable: $\exp(f_0^{\rm coord}(\theta))=\prod_{j=1}^d\exp[q_{\rm int}(\theta_j,e_j)-q_{\rm int}(0,e_j)]$. When this is a probability transform, it describes independent coordinates, and the boundary representations likewise impose separability within each face. They suit product-form benchmarks and have parameter count linear in $d$ at fixed widths: $p_e=p_b=16$ and width $H=128$ give $15{,}108+64d$ real scalar parameters (Appendix~\ref{app:parameter-count}).

For dependent transforms, we add anchored interaction subnetworks $g_{\rm int}$ and $g_{\rm bd}$:
\begin{align}
 f_0(\theta)&=f_0^{\rm coord}(\theta)+g_{\rm int}(\theta)-g_{\rm int}(0),\label{eq:global-int}\\
 f_k(\theta)&=f_k^{\rm coord}(\theta)+g_{\rm bd}(\theta_{-k},b_k)-g_{\rm bd}(0,b_k),\qquad k=1,\ldots,d,\label{eq:global-bd}
\end{align}
where $\theta_{-k}\in\CC^{d-1}$ preserves the order of the remaining coordinates and the boundary zero lies in $\CC^{d-1}$. Mixing coordinates before applying a Cauchy activation permits cross-coordinate interactions while retaining \eqref{eq:identities} and Proposition~\ref{prop:holo}. The coordinate and interaction blocks are trained jointly, allowing the architecture to represent nonseparable log transforms while preserving exact origin normalization and boundary-coordinate exclusion. In two dimensions, each boundary face has only one free coordinate, so the 2D Harrison model uses an interior interaction block $g_{\rm int}$ only; the 10D correlated model uses both interaction components (Appendix~\ref{app:parameter-count}).

\subsection{Approximation with shared blocks and exact constraints}\label{sec:approx}
We next establish approximation properties for these architectures, allowing hidden widths to increase while keeping the embedding dimensions fixed. The target class is holomorphic, consistent with transform learning. While polynomial-based approximation is a standard tool for analytic neural networks \citep{ParkWojtowytsch2022}, our construction treats the rational Cauchy activation in \eqref{eq:cauchy} directly and incorporates shared coordinate/boundary embeddings, origin anchors, and boundary-coordinate exclusion.

For $c\in\CC^d$ and radii $r=(r_1,\ldots,r_d)\in(0,\infty)^d$, write $D(c,r)=\{\theta\in\CC^d:|\theta_j-c_j|<r_j,\,j=1,\ldots,d\}$. Fix nested polydiscs
\begin{equation}\label{eq:polydisc}
K=\overline{D(c,r)},\qquad U=D(c,r^+),\qquad |c_j|<r_j<r_j^+<\infty,\qquad j=1,\ldots,d,
\end{equation}
so that $0$ lies in the interior of $K\Subset U$. Define the simultaneous value and first-derivative norm
\[
\|F\|_{1,K}=\max\biggl\{\sup_{\theta\in K}|F(\theta)|,\;\max_{1\le j\le d}\sup_{\theta\in K}|\partial_j F(\theta)|\biggr\}.
\]
Let $F_0,F_1,\ldots,F_d$ be holomorphic on $U$, with $F_0(0)=0$, $F_k(0)=\log m_k$, and $F_k$ independent of $\theta_k$ for $k\ge 1$.

\begin{theorem}[Constrained HCN approximation]\label{thm:approx}
Fix $d\ge 2$, $p_e,p_b\ge 1$, and embeddings $e_j\in\CC^{p_e}$ and $b_k\in\CC^{p_b}$ that are pairwise distinct within each family. For every $\eta>0$, the following statements hold:
\begin{enumerate}\setlength{\itemsep}{2pt}\setlength{\topsep}{3pt}
\item[(i)] \textbf{Coordinate-separable targets.} If the targets are separable,
\[
F_0(\theta)=\sum_{j=1}^d h_j(\theta_j),\qquad F_k(\theta)=\log m_k+\sum_{j\ne k}h_{kj}(\theta_j),\qquad k=1,\ldots,d,
\]
with univariate functions $h_j,h_{kj}$ holomorphic on the corresponding coordinate discs of $U$ and zero at the origin, then the coordinate architecture \eqref{eq:int-coord}--\eqref{eq:bd-coord} admits finite-width blocks satisfying
\begin{equation}\label{eq:approx-bound}
\max_{0\le k\le d}\|f_k-F_k\|_{1,K}<\eta.
\end{equation}
\item[(ii)] \textbf{General dependent targets.} For general targets satisfying the preceding holomorphicity, origin, and boundary-independence conditions, the interaction architecture \eqref{eq:global-int}--\eqref{eq:global-bd} admits finite-width blocks satisfying \eqref{eq:approx-bound}. Its coordinate component may be fixed in advance, provided all its required denominators are nonzero on $U$.
\end{enumerate}
In both cases, the identities \eqref{eq:identities} hold exactly. For any fixed $s_*>s_{\min}$, all constructed blocks can use scale $s_*$ and have every raw denominator bounded below by $\tfrac34 s_*^2$ in modulus on the required inputs from $K$, including anchors.
\end{theorem}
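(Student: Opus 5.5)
The plan is to reduce everything to polynomial approximation and then realize polynomials exactly in the limit by finite differences of Cauchy units evaluated near the origin of the activation. Choose an intermediate closed polydisc $K'$ with $K\Subset \operatorname{int}K'$ and $K'\Subset U$. By Cauchy's estimates, uniform approximation within $\eta'$ on $K'$ of holomorphic functions implies $\|\cdot\|_{1,K}$ approximation within $C\eta'$, where $C$ depends only on the gap between $K$ and $K'$. It therefore suffices to approximate values uniformly on $K'$, provided every approximant is holomorphic on a neighbourhood of $K'$. The targets are holomorphic on $U$, so their Taylor series converge uniformly on $K'$. Each $h_j$, $h_{kj}$ (case (i)) and each residual (case (ii)) can thus be replaced by a polynomial vanishing at the origin, up to error $\eta'$.

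\textbf{Realizing polynomials with one Cauchy scale.} Fix $s_*$, and take $a=1$, $b=0$ in $\sigma(z)=z/(z^2+s_*^2)$. If $|z|\le s_*/2$, then $|z^2+s_*^2|\ge s_*^2-s_*^2/4=\tfrac34 s_*^2$, so it suffices to keep every pre-activation, anchors included, in the disc of radius $s_*/2$. All inputs (coordinates from $K'$ and the fixed embeddings) lie in a bounded set. Hence a unit $\sigma(\lambda\,\ell(x)+c)$, with $\ell$ a linear form and $|c|\le s_*/4$, satisfies this bound for all small $|\lambda|$.

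The map $\lambda\mapsto\sigma(\lambda\ell(x)+c)$ is holomorphic with $n$th Taylor coefficient $\ell(x)^n\sigma^{(n)}(c)/n!$. Since $\sigma$ is a nonpolynomial rational function, for each $n\le N$ we can fix $c$ with $\sigma^{(n)}(c)\ne0$. A finite-difference combination over $\lambda\in\{0,\delta,\dots,n\delta\}$ then yields $\ell(x)^n$ up to an error $O(\delta)$ that is uniform on $K'$. These combinations are linear combinations of units, hence admissible as $w_2^{\mathsf T}\sigma(W_1x+c_1)+c_2$. By polarization, every monomial in $p$ variables of degree $\le N$ is a finite linear combination of powers $\ell(x)^n$ for finitely many linear forms $\ell$. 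Therefore every polynomial on $\CC^p$ is a uniform limit on bounded sets of single blocks whose denominators stay $\ge\tfrac34 s_*^2$.

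\textbf{Handling shared embeddings.} The block $q_{\rm int}$ sees $(\theta_j,e_j)$, and it must output approximately $P_j(\theta_j)$ for each $j$. Choose $u\in\CC^{p_e}$ so that the scalars $\zeta_j=u^{\mathsf T}e_j$ are distinct; this is possible because the $e_j$ are pairwise distinct and the bad $u$ form a finite union of hyperplanes. Lagrange interpolation in $\zeta$ gives a polynomial $Q(x,\zeta)$ with $Q(x,\zeta_j)=P_j(x)$, which is a polynomial in $(x,e)$ and is realized as above.

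For $q_{\rm bd}$, do the same with a product grid in $(\zeta_j,\xi_k)$ where $\xi_k=v^{\mathsf T}b_k$. This needs a two-variable interpolant prescribing $P_{kj}$ for $j\ne k$; the diagonal values $j=k$ are set arbitrarily, e.g.\ to $0$. The anchored differences $q(\theta_j,\cdot)-q(0,\cdot)$ only double the error because the target polynomials vanish at $0$. The $\log m_k$ term is exact. The identities \eqref{eq:identities} hold algebraically by the architecture, and summing $d$ coordinate errors gives \eqref{eq:approx-bound} after taking $\eta'$ small.

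\textbf{Part (ii).} Fix the coordinate component; by hypothesis its denominators are nonzero on $U$, so it is holomorphic there by Proposition~\ref{prop:holo}. The residuals $R_0=F_0-f_0^{\rm coord}$ and $R_k=F_k-f_k^{\rm coord}$ are holomorphic on $U$ and vanish at the origin. Each $R_k$ is independent of $\theta_k$, so $R_k$ is a function of $\theta_{-k}$ on a polydisc in $\CC^{d-1}$. Approximate $R_0$ by a polynomial and realize it with $g_{\rm int}$. For the boundary residuals, interpolate in $\xi_k$ as before to obtain a single polynomial in $(\theta_{-k},b)$ that approximates $R_k$ at $b=b_k$, and realize it with $g_{\rm bd}$. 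Anchoring again costs at most a factor $2$.

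\textbf{Main obstacle.} I expect the hardest step to be the finite-difference extraction of the powers $\ell^n$. It must keep simultaneous control of three things: uniformity over the bounded input set, the denominator margin $\tfrac34 s_*^2$ at every evaluation point including anchors, and holomorphy on a neighbourhood of $K'$ so that the Cauchy estimates transfer errors to first derivatives. I would handle this by fixing the degree $N$ first, then the biases $c$, then letting $\delta\to0$. The remainder of the $\lambda$-Taylor expansion is then bounded by Cauchy estimates in $\lambda$ on a disc on which all pre-activations remain within radius $s_*/2$.
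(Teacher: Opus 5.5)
Your proposal is correct, and its outer structure matches the paper's. You use Taylor polynomials for the targets (or for the residuals $F_k-f_k^{\rm coord}$ in part (ii)). You use Lagrange interpolation over a generic scalar projection of the distinct embeddings, in tensor-product form in $(\zeta,\xi)$ for $q_{\rm bd}$ with the $j=k$ slices set to zero, exactly as in the paper's $Q_{\rm bd}$. Anchoring then costs a factor of two.

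Where you genuinely differ is the key lemma: how a single Cauchy block reproduces powers $\ell(x)^n$ with the $\tfrac34 s_*^2$ margin.

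\textbf{The paper's route.} It picks $a_r=-\mathrm{i}s_*$ and $b_r=s_*^2$, so that on preactivations $\mathrm{i}s_*t$ each unit collapses to the geometric kernel $1/(1-t)$. A roots-of-unity filter $\frac{1}{Nh^n}\sum_{\ell}\omega^{-\ell n}(1-h\omega^\ell u)^{-1}$ then isolates $u^n$ with the explicit error $A^n(hA)^N/(1-(hA)^N)$. This construction uses bias-free preactivations, but it relies on complex weights $\mathrm{i}s_*h\omega^\ell$ and on the specific rational form of the activation.

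\textbf{Your route.} You take $a=1$, $b=0$ and shift to a bias $c$ with $\sigma^{(n)}(c)\ne0$. This shift is needed for even $n$ under your choice $b=0$, since $z/(z^2+s_*^2)$ is odd. You then read off $\sigma^{(n)}(c)\,\ell(x)^n$ from an $n$th forward difference in the weight scale $\lambda$, with an $O(\delta)$ remainder controlled by Cauchy estimates in $\lambda$. This is the classical derivative-in-the-weights argument for non-polynomial activations. It is less explicit than the paper's filter and converges only linearly in $\delta$ rather than to order $(hA)^N$. On the other hand, it uses nothing beyond holomorphy on $|z|<s_*$ and non-polynomiality, so it adapts directly to other holomorphic activations, and it needs only $n+1$ units per power.

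\textbf{Bookkeeping.} You transfer value error on an intermediate polydisc $K'$ to first derivatives on $K$ once, at the end. The paper instead handles derivatives inside the lemma and tracks value and derivative errors separately through the anchors. Both are valid, and yours is slightly tidier.

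One line worth adding: $s_*$ is realizable as $\operatorname{softplus}(\rho)+s_{\min}$ precisely because $s_*>s_{\min}$.
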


The proof (Appendix~\ref{app:proofs}) first approximates multivariate polynomials by Cauchy blocks with a uniform denominator margin via roots-of-unity filtering, and then uses Lagrange interpolation over projections of the distinct embeddings $e_j$ and $b_k$ to realize all coordinate and boundary slices through shared blocks.

Part (i) establishes representation sufficiency for separable targets despite parameter sharing; part (ii) removes the separability restriction through anchored interactions. Conversely, every coordinate-only interior representation satisfies $\partial_i\partial_j f_0^{\rm coord}(\theta)=0$ for $i\ne j$. Cauchy's integral estimate therefore gives, whenever $F_0$ is holomorphic near $K$,
\begin{equation}\label{eq:obstruction}
\sup_{\theta\in K}\bigl|F_0(\theta)-f_0^{\rm coord}(\theta)\bigr|\ge r_i r_j\bigl|\partial_i\partial_j F_0(c)\bigr|,\qquad i\ne j.
\end{equation}
At $c=0$, for an analytic stationary log transform $F_0=\log\phi_0$ with finite second moments, the mixed derivative is $\partial_i\partial_j F_0(0)=\operatorname{Cov}_\pi(Z_i,Z_j)$. Nonzero cross-covariance thus creates a uniform approximation obstruction for the joint interior log transform on $K$ that increasing coordinate-only width cannot remove, motivating the interaction architecture in \eqref{eq:global-int}--\eqref{eq:global-bd}.

\subsection{Training constraints}\label{sec:constraints}
The training objective follows the BAR learning framework of \citet{DaiZhang2026}. For complex interior samples $\theta^{(n)}$, $n=1,\ldots,N_{\rm BAR}$, a common real shift $\nu_n$ stabilizes the exponential terms:
\begin{equation}\label{eq:kappas}
\kappa_{nk}=\gamma_k(\theta^{(n)})e^{f_k(\theta^{(n)})-\nu_n},\qquad A_n=\kappa_{n0},\qquad B_n=\sum_{k=1}^d\kappa_{nk}.
\end{equation}
We define the normalized relative residual and its weighted mean by
\begin{equation}\label{eq:barloss}
\mathcal E(A,B)=\frac{|A-B|^2}{(|A|+|B|+\varepsilon)^2},\qquad
\mathcal L_{\rm BAR}=\frac{\sum_{n=1}^{N_{\rm BAR}} w_n\mathcal E(A_n,B_n)}{\sum_{n=1}^{N_{\rm BAR}} w_n+\varepsilon_w},
\end{equation}
where $\varepsilon,\varepsilon_w>0$ are stabilizers and the detached weights $w_n\in[0.5,2]$ emphasize larger residuals (Appendix~\ref{app:stable-loss}).

Additional BAR constraints are obtained by evaluating the transforms at auxiliary arguments that isolate a single boundary term. For sampled boundary-batch vectors $y^{(n)}\in\CC^d$, $n=1,\ldots,N_{\rm bd}$, and standard basis vectors $u_k\in\RR^d$, define
\begin{equation}\label{eq:isolated}
v_k=\frac{R^{-\mathsf T}u_k}{(R^{-\mathsf T})_{kk}},\qquad \widetilde\theta^{(n,k)}=y_k^{(n)}v_k,\qquad k=1,\ldots,d.
\end{equation}
Since $R^{\mathsf T}v_k=u_k/(R^{-\mathsf T})_{kk}$ (with $(R^{-\mathsf T})_{kk}=1$ in all benchmarks considered here), all boundary coefficients except the $k$th vanish, reducing the BAR at $\widetilde\theta^{(n,k)}$ to the two-term relation $\gamma_0\phi_0=\gamma_k\phi_k$. We evaluate the pairwise boundary loss directly via the shift-stabilized transform residual:
\begin{equation}\label{eq:bdloss}
\mathcal L_{\rm bd}=\frac1{N_{\rm bd}d}\sum_{k=1}^d\sum_{n=1}^{N_{\rm bd}}\mathcal E\bigl(\widetilde\kappa_0^{(n,k)},\widetilde\kappa_k^{(n,k)}\bigr),
\end{equation}
where each pair shares a common stabilizing shift, avoiding complex logarithms of the BAR coefficients.

At real arguments in their convergence domain, transforms of positive measures are positive and coordinatewise nonincreasing. For real samples $x^{(n)}\in\RR^d$, $n=1,\ldots,N_{\rm real}$, write $J_{nkj}=\partial_j f_k(x^{(n)})$. We encourage real, nonincreasing log transforms through
\begin{align}
\mathcal L_{\rm mono}&=\frac{\sum_{n=1}^{N_{\rm real}}\sum_{k=0}^d\sum_{j=1}^d[\max\{0,\Re J_{nkj}\}]^2}{N_{\rm real}(d+1)d},\label{eq:monoloss}\\
\mathcal L_{\rm imag}&=\frac{\sum_{n=1}^{N_{\rm real}}\sum_{k=0}^d[\Im f_k(x^{(n)})]^2}{N_{\rm real}(d+1)}.\label{eq:imagloss}
\end{align}
The complete training loss is
\begin{equation}\label{eq:full-loss}
\mathcal L=\mathcal L_{\rm BAR}+\mathcal L_{\rm bd}+\mathcal L_{\rm mono}+\mathcal L_{\rm imag},
\end{equation}
with all four weights equal to one in all reported experiments. Moreover, if the target transforms satisfy the BAR and real-axis monotonicity on a compact sampling support covered by Theorem~\ref{thm:approx}, uniform log-transform approximation with $\eta\downarrow 0$ drives all four loss terms in \eqref{eq:full-loss} simultaneously to zero (Appendix~\ref{app:loss-consequences}).

\section{Training under GPU Memory Constraints}\label{sec:training}
While structural holomorphicity completely eliminates the Cauchy--Riemann penalty and its derivative graph, evaluating the BAR residual at each sample still couples $d+1$ transforms, and the real-axis monotonicity penalty requires first-order coordinate derivatives. To make high-dimensional training fast and memory-efficient on a single GPU, we combine closed-form complex input derivatives, sequential loss-branch backpropagation, and dimension-adapted batch schedules.

\subsection{Explicit input derivatives}
Because the rational Cauchy activation \eqref{eq:cauchy} and scalar subnetwork \eqref{eq:block} are elementary holomorphic functions, their complex derivatives admit simple closed-form expressions:
\begin{align}
\sigma'_r(z)&=\frac{a_r(s_r^2-z^2)-2b_rz}{(z^2+s_r^2)^2},\label{eq:cauchyprime}\\
D_xq_\omega(x)&=\bigl(w_2\odot\sigma'(W_1x+c_1)\bigr)^{\mathsf T}W_1,\label{eq:blockprime}
\end{align}
where $\odot$ denotes the elementwise product and no complex conjugation enters the holomorphic chain rule. Coordinate blocks require only the scalar derivative with respect to their univariate transform input $\theta_j$, whereas interaction blocks extract the corresponding columns of $W_1$. For the interior log transform, the $j$th partial derivative is simply
\begin{equation}\label{eq:fprime}
\partial_jf_0(\theta)=\partial_1q_{\rm int}(\theta_j,e_j)+\partial_jg_{\rm int}(\theta),
\end{equation}
where the second term is omitted in the coordinate-separable model, and origin anchors vanish under $\partial_j$ while retaining their dependence on trainable network parameters.

We evaluate input derivatives explicitly only for the real-axis monotonicity loss $\mathcal L_{\rm mono}$; the interior BAR loss $\mathcal L_{\rm BAR}$ and isolated boundary loss $\mathcal L_{\rm bd}$ require forward function evaluations alone. Computing $J_{nkj}=\partial_j f_k(x^{(n)})$ directly via the closed-form expressions \eqref{eq:cauchyprime}--\eqref{eq:fprime} replaces the inner automatic-differentiation pass used to construct input derivatives. The explicit Jacobian expressions remain differentiable with respect to the network parameters $\omega$, and reverse-mode automatic differentiation computes the parameter gradients of the resulting losses.

\subsection{Sequential backpropagation and the sampling budget}
In a $d$-dimensional RBM, evaluating all $d$ boundary log transforms across a batch of $B$ arguments requires $Bd(d-1)$ coordinate--face pair evaluations in addition to origin anchors. Linear parameter growth therefore does not imply linear training-memory growth. To bound individual tensor allocations during the forward pass, we chunk coordinate--face pair evaluations and group boundary faces in the high-dimensional product-form implementation.

To prevent backward-pass intermediate activations from accumulating across all loss terms simultaneously, our 30D and 50D implementations further evaluate and backpropagate the interior BAR branch ($\mathcal L_{\rm BAR}$), the isolated boundary branch ($\mathcal L_{\rm bd}$), and the combined real-axis branch ($\mathcal L_{\rm mono}+\mathcal L_{\rm imag}$) in sequence. Each branch executes its backward pass, accumulates parameter gradients, and releases its computation graph before the next branch is constructed, with anchors recomputed in fresh graphs:
\begin{equation}\label{eq:accum}
\nabla_\omega\mathcal L=\nabla_\omega\mathcal L_{\rm BAR}+\nabla_\omega\mathcal L_{\rm bd}+\nabla_\omega(\mathcal L_{\rm mono}+\mathcal L_{\rm imag}).
\end{equation}
For fixed parameters $\omega$, samples, and detached BAR weights $w_n$, sequential accumulation yields the same summed-loss gradient as a combined backward pass up to floating-point arithmetic, while reducing the simultaneous storage of branch-specific backward intermediates. Global gradient-norm clipping and a single Adam update are then applied once all three branches have accumulated their gradients (Algorithm~1).

\begin{center}
\begin{minipage}{0.96\linewidth}\small
\noindent\textbf{Algorithm 1. One optimizer update with sequential loss backpropagation}\par\vspace{3pt}
\begin{enumerate}\setlength{\itemsep}{1pt}\setlength{\topsep}{1pt}\setlength{\parskip}{0pt}
\item Clear gradients and sample fresh BAR, boundary, and real-axis batches.
\item Evaluate the BAR loss $\mathcal L_{\rm BAR}$; backpropagate and release its computation graph.
\item Evaluate the boundary loss $\mathcal L_{\rm bd}$; accumulate its gradients and release its graph.
\item Evaluate real-axis values and explicit Jacobians \eqref{eq:cauchyprime}--\eqref{eq:fprime}; backpropagate $\mathcal L_{\rm mono}+\mathcal L_{\rm imag}$ and release the graph.
\item Check gradient finiteness, apply global gradient-norm clipping with threshold $1.0$ to the accumulated gradient, and take one Adam update.
\end{enumerate}
\end{minipage}
\end{center}

The 10D implementation additionally accumulates boundary faces separately within Step~3, retaining the $1/d$ factor in \eqref{eq:bdloss}. The 2D and 20D implementations use a combined backward call. These are graph-scheduling choices rather than changes to the training objective.

We count freshly sampled transform-argument vectors as base inputs. With $U$ optimizer updates, the cumulative sampling budget is
\begin{equation}\label{eq:budget}
N_{\rm train}=U(B_{\rm BAR}+B_{\rm bd}+B_{\rm real}),
\end{equation}
where a boundary base input supplies $d$ isolated arguments \eqref{eq:isolated} and a real-axis base input supplies two losses; derived arguments and origin anchors are excluded from $N_{\rm train}$. As dimension increases, Table~\ref{tab:batches} reduces per-update batch sizes and increases the number of optimizer updates $U$ to maintain a per-run budget of 64 million sampled base inputs in the high-dimensional experiments.

\begin{table}[htbp]\centering\small
\caption{HCN batches and fixed training endpoints.}\label{tab:batches}
\begin{tabular}{lrrrrr}\toprule
Benchmark & Updates ($U$) & $B_{\rm BAR}$ & $B_{\rm bd}$ & $B_{\rm real}$ & Base inputs ($N_{\rm train}$)\\\midrule
2D, both models & 15,000 & 3,416 & 427 & 427 & 64.05M\\
10D non-product & 25,000 & 2,048 & 256 & 256 & 64M\\
20D product & 25,000 & 2,048 & 256 & 256 & 64M\\
30D product & 50,000 & 1,024 & 128 & 128 & 64M\\
50D product & 100,000 & 512 & 64 & 64 & 64M\\\bottomrule
\end{tabular}
\par\vspace{3pt}\begin{minipage}{.91\linewidth}\footnotesize $\mathrm M=10^6$ base inputs per run. Updates count optimizer steps, rather than backward calls.\end{minipage}
\end{table}

\subsection{Sampling and optimization}
Each update resamples arguments. The BAR batch mixes approximately 50\% general complex inputs, 20\% complex-diagonal inputs, 20\% near-diagonal inputs, and a remainder on the nonnegative real diagonal. Diagonal coverage reflects the query $\phi_0(s\bm{1})$ used for sum tails, while off-diagonal inputs constrain the multivariate BAR. The nested-box sampler and integer rounding are documented in Appendix~\ref{app:implementation}.

Complex samples are drawn from a bounded domain $\Theta=[\underline{\theta}^{\rm Re},\overline{\theta}^{\rm Re}]^d\times[-\overline{\theta}^{\rm Im},\overline{\theta}^{\rm Im}]^d$ following the two-stage nested-box sampling scheme of \citet{DaiZhang2026}. Boundary base vectors are drawn from the same complex sampler and mapped to the isolated arguments in \eqref{eq:isolated}, while real-axis batches use the corresponding real-part projection of the nested-box sampler.

All HCN training runs use \texttt{complex128} arithmetic and Adam with learning rate $5\times10^{-5}$ for the first 80\% of updates and $10^{-5}$ thereafter, retaining optimizer state across the schedule transition. Gradient clipping follows gradient accumulation, and preflight denominator and finiteness checks verify evaluated inputs (Appendix~\ref{app:pole-checks}). All reported evaluations in Section~\ref{sec:experiments} use the fixed endpoints in Table~\ref{tab:batches}.

\section{Numerical Experiments}\label{sec:experiments}
We evaluate resource requirements, tail accuracy, and extension beyond product form across 2- to 50-dimensional benchmarks. We also examine numerical inversion degree, complex activation choice, and late-training stability in the 50D run. Throughout all experiments, networks are trained solely from the RBM parameters $(\mu,\Sigma,R)$; analytical and simulation references enter post-training evaluation only.

\subsection{Benchmarks and evaluation protocol}
The product-form benchmark family of \citet{DaiZhang2026} uses $d\in\{2,20,30,50\}$, $\mu=-\bm{1}$, $R_{ii}=1$, $R_{i,i-1}=-1$, and $\Sigma=R+R^{\mathsf T}$, with unspecified entries zero. Under this skew-symmetric structure, the stationary coordinates are independent $\operatorname{Exp}(i)$ variables \citep{HarrisonWilliams1987}, giving exact sum-workload transform and tail probability
\begin{equation}\label{eq:product-true}
\phi_S(s)=\prod_{i=1}^d\frac{i}{i+s},\qquad p_d(t)=1-(1-e^{-t})^d,\qquad t\ge0.
\end{equation}
The two-dimensional Harrison non-product benchmark \citep{DaiHarrison1992,DaiZhang2026} has parameters
\begin{equation}\label{eq:harrison}
\mu=\begin{pmatrix}-1\\0\end{pmatrix},\qquad \Sigma=I_2,\qquad R=\begin{pmatrix}1&0\\-1&1\end{pmatrix}.
\end{equation}
Its reference tail probabilities are obtained by one-dimensional polar quadrature of the known stationary density (Appendix~\ref{app:reference-formulas}). The exploratory ten-dimensional correlated non-product model is specified in Section~\ref{sec:10d}.

We evaluate the fixed endpoints in Table~\ref{tab:batches} using a common random seed (12345), and include three seeds (12345, 22345, and 32345) at 20D to summarize initialization variability via the sample mean and standard deviation of per-run errors. No minimum-error checkpoint is substituted for a fixed endpoint. Fixed-Talbot inversion uses degree $M=5$ in 2D and $M=8$ in higher dimensions; at 30D, we also report $M=5$ for the degree-matched comparison with published NN4MGF results. Changing the inversion degree does not retrain the network.

Following the probability-clipping convention in the released high-dimensional NN4MGF evaluation code, all finite real inversion outputs are projected onto $[0,1]$ via $\widehat p_M(t)=\min\{1,\max\{0,T_M[\widehat G_S](t)\}\}$. Against positive reference probabilities, mean relative error (MRE, in percent) is
\begin{equation}\label{eq:mre}
\MRE_{1:K}=\frac{100}{K}\sum_{t=1}^{K}\frac{|\widehat p_M(t)-p_{\rm ref}(t)|}{p_{\rm ref}(t)}.
\end{equation}
We use $\MRE_{1:8}$ as the primary accuracy metric for the product-form benchmarks and the exploratory 10D example, and additionally report $\MRE_{1:5}$ and $\MRE_{1:10}$ for the product-form benchmarks to characterize shorter and extended tail ranges. The 2D Harrison benchmark uses $\MRE_{1:5}$ to match the evaluation range of \citet{DaiZhang2026}. The 2D and 30D NN4MGF MREs are computed from the pointwise tail predictions released in the official code repository of \citet{DaiZhang2026}, while 20D uses archived Degree-5 and Degree-8 evaluations of the released NN4MGF checkpoint under their respective training budgets.

\subsection{Computational performance and dimensional scaling}\label{sec:performance}
Both 30D configurations are profiled on the same NVIDIA H100 NVL GPU, with 10 warmup updates followed by 200 measured updates. CUDA synchronization barriers bracket update timing, and peak-memory statistics are reset after warmup. Timings include in-loop sampling, forward evaluation, and backpropagation, excluding model construction and final output generation.

\begin{table}[H]
\centering\small
\caption{Thirty-dimensional training profiles on the same NVIDIA H100 NVL.}
\label{tab:profile}
\setlength{\tabcolsep}{4pt}
\begin{tabular}{lrr}
\toprule
Quantity & HCN & NN4MGF \\
\midrule
Real scalar parameters & 17,028 & 43,080 \\
Base inputs / update & 1,280 & 32,768 \\
Measured updates (after 10 warmup) & 200 & 200 \\
Time for 200 updates (s) & 33.023 & 320.209 \\
Mean time / update (s) & 0.165113 & 1.601046 \\
Peak allocated memory (GiB) & 7.805 & 77.815 \\
Peak reserved memory (GiB) & 8.414 & 81.428 \\
Updates in full schedule & 50,000 & 300,000 \\
Base inputs in full schedule & 64M & 9.8304B \\
Projected full-schedule update time (h) & 2.293 & 133.420 \\
\bottomrule
\end{tabular}
\par\vspace{3pt}\begin{minipage}{\linewidth}\footnotesize Both use double precision: complex128 for HCN and float64 for NN4MGF. Times and memory peaks are measured over the profile; full-schedule times are projections. GiB denotes $2^{30}$ bytes. The methods use their respective batches and architectures.\end{minipage}
\end{table}

As shown in Table~\ref{tab:profile}, mean per-update times are 0.165113~s for HCN and 1.601046~s for NN4MGF, giving a \textbf{9.7-fold reduction in mean update time} (ratio $9.697$). Peak allocated GPU memory is 7.805~GiB for HCN versus 77.815~GiB for NN4MGF, an \textbf{approximately tenfold reduction} (ratio $9.970$). At 30D, the tested NN4MGF configuration reserves 81.428~GiB of H100 memory, reflecting the memory cost of retaining first-order derivative graphs across large batches for the derivative-based penalties. By removing $\mathcal L_{\rm CR}$ structurally and combining explicit real-axis Jacobians with sequential branch backpropagation, HCN keeps peak reserved memory under 8.5~GiB.

Multiplying the measured mean update times by the respective scheduled update counts ($50{,}000$ for HCN and $300{,}000$ for NN4MGF) gives projected full-schedule update times of 2.293 and 133.420 hours:
\begin{equation}\label{eq:profile-ratio}
\frac{\widehat T_{\NN}}{\widehat T_{\HCN}}
=\frac{1.60104552}{0.16511285}\,\frac{300{,}000}{50{,}000}
\approx9.697\times6=58.18.
\end{equation}
The \textbf{58.2-fold projected full-schedule update-time ratio} combines the measured 9.7-fold per-update ratio with the sixfold difference in scheduled update counts. This projection extrapolates the short-profile measurements to the respective full training schedules rather than measuring completed training runs or time to a common accuracy target, and compares the complete tested configurations—including their architectures, batch sizes, derivative calculations, and graph schedules—rather than isolating the contribution of CR penalty removal.

The tested NN4MGF code path samples two batches of $16{,}384$ per update, giving $300{,}000\times2\times16{,}384=9.8304$ billion base inputs over its full schedule. HCN's 64M cumulative budget represents $0.651\%$ of that count ($1.302\%$ of the paper-reported $300{,}000\times16{,}384$ single-batch count; Appendix~\ref{app:resource-accounting}). Separately, this reduced memory footprint allows all HCN training experiments in this study to be completed on a single GPU, scaling up to the 50D benchmark ($100{,}000$ updates, $18{,}308$ real scalar parameters, and 64M base inputs) on a single NVIDIA GeForce RTX 4080 Laptop GPU.

\subsection{Tail accuracy and the 50D extension}
Table~\ref{tab:accuracy} compares tail errors at the fixed training endpoints. Across all degree-matched comparisons over the primary evaluation ranges, HCN attains lower reported MRE than NN4MGF. The error reductions are substantial on the two-dimensional benchmarks—especially on the non-product Harrison model—and modest on the 20D and 30D product-form benchmarks, where HCN achieves competitive accuracy using less than $1\%$ of the baseline sampling budget while extending computation to 50 dimensions.

\begin{table}[H]
\centering\small
\caption{Tail accuracy at fixed training endpoints (MRE, percent).}
\label{tab:accuracy}
\setlength{\tabcolsep}{4pt}
\begin{tabular}{lllrrr}
\toprule
Benchmark & Method & $M$ & $1{:}5$ & $1{:}8$ & $1{:}10$ \\
\midrule
2D product & HCN & 5 & $2.147$ & $2.213$ & $9.737$ \\
2D product & NN4MGF (published) & 5 & $2.448$ & $8.050$ & $13.915$ \\
2D Harrison & HCN & 5 & $1.422$ & --- & --- \\
2D Harrison & NN4MGF (published) & 5 & $6.684$ & --- & --- \\
20D product & HCN & 8 & $1.441 \pm 1.322$ & $7.632 \pm 1.565$ & $12.264 \pm 3.678$ \\
20D product & NN4MGF (archived) & 8 & $2.802$ & $8.148$ & $18.908$ \\
30D product & HCN (comparison) & 5 & $3.429$ & $10.107$ & $19.084$ \\
30D product & NN4MGF (published) & 5 & $8.353$ & $10.476$ & $21.507$ \\
30D product & HCN (primary) & 8 & $1.683$ & $6.546$ & $8.872$ \\
50D product & HCN (extension) & 8 & $1.495$ & $6.367$ & $8.011$ \\
\bottomrule
\end{tabular}
\par\vspace{3pt}\begin{minipage}{\linewidth}\footnotesize The 20D HCN entries are the mean $\pm$ sample SD over three seeds; SD is in percentage points. Other entries are single runs or single baseline records. The same 30D HCN weights are used at both degrees. Dashes indicate unavailable reporting ranges.\end{minipage}
\end{table}

At 2D (Figure~\ref{fig:2d}), the product-form $1{:}8$ errors are $2.213\%$ for HCN and $8.050\%$ for NN4MGF ($9.737\%$ versus $13.915\%$ on $1{:}10$). On the non-product 2D Harrison benchmark, HCN with the interior interaction branch $g_{\rm int}$ reduces $\MRE_{1:5}$ to \textbf{1.422\%} compared with \textbf{6.684\%} for NN4MGF. Here the HCN Harrison architecture matches the HCN 2D product-form model's parameter budget ($15{,}236$ real scalar parameters and 64.05M inputs) by adjusting subnetwork widths to accommodate the interior interaction branch (Appendix~\ref{app:parameter-count}). This improvement illustrates the effectiveness of the anchored interaction representation on a non-product RBM with dependent coordinates.

\begin{figure}[htbp]\centering
\includegraphics[width=.97\linewidth]{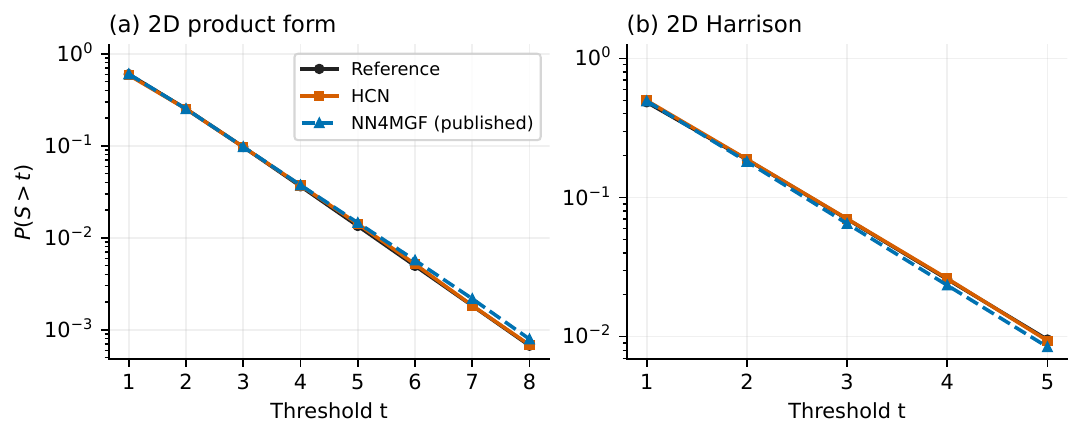}
\caption{2D tails at Degree 5. HCN uses the fixed 15,000-update endpoints; NN4MGF uses published values from \citet{DaiZhang2026}. References are analytical in (a) and density-based in (b).}\label{fig:2d}
\end{figure}

In the high-dimensional product-form benchmarks (Figure~\ref{fig:product}):
\begin{itemize}\setlength{\itemsep}{2pt}\setlength{\topsep}{3pt}
\item \textbf{20D benchmark:} At Degree~8, HCN achieves a three-seed mean $\MRE_{1:8}$ of $7.632\%$ (SD $1.565$ percentage points; individual runs $9.003\%$, $7.966\%$, and $5.927\%$), compared with $8.148\%$ for NN4MGF. On $1{:}5$ and $1{:}10$, HCN's mean MREs are $1.441\%\pm 1.322\%$ and $12.264\%\pm 3.678\%$, compared with $2.802\%$ and $18.908\%$ for NN4MGF.
\item \textbf{30D benchmark:} Under the degree-matched $M=5$ rule, $\MRE_{1:8}$ is $10.107\%$ for HCN versus $10.476\%$ for NN4MGF ($3.429\%$ versus $8.353\%$ on $1{:}5$, and $19.084\%$ versus $21.507\%$ on $1{:}10$). Evaluating the same 30D HCN weights with the primary $M=8$ rule reduces $\MRE_{1:5}$, $\MRE_{1:8}$, and $\MRE_{1:10}$ to $1.683\%$, $6.546\%$, and $8.872\%$, respectively.
\item \textbf{50D extension:} Completed on a single laptop GPU, the 50D HCN run achieves $\MRE_{1:5}$ of \textbf{1.495\%}, $\MRE_{1:8}$ of \textbf{6.367\%}, and $\MRE_{1:10}$ of \textbf{8.011\%}, closely tracking the analytical survival curve across the evaluation range (Figure~\ref{fig:product}c).
\end{itemize}

\begin{figure}[htbp]\centering
\includegraphics[width=.97\linewidth]{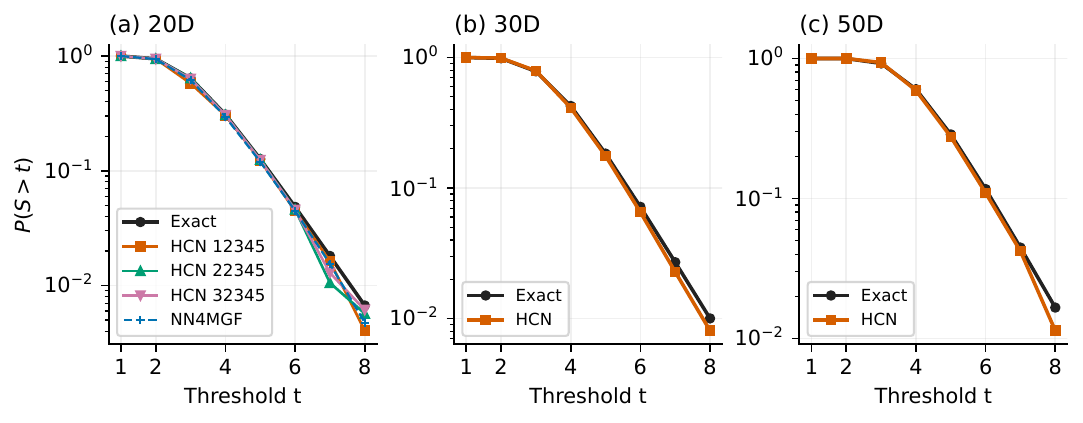}
\caption{Product-form tails at Degree 8. Panel (a) shows the three 20D HCN runs and the archived NN4MGF checkpoint. Panels (b) and (c) show the fixed endpoints at 30D and 50D.}\label{fig:product}
\end{figure}

\subsection{Inversion and activation diagnostics}\label{sec:inversion}
When initially evaluating our models under the baseline Degree $M=5$ rule, we observed that HCN achieved lower error than NN4MGF in 2D and 30D, yet showed slightly higher error on segments of the 20D benchmark (for instance, on $1{:}10$ under $M=5$, HCN's three-seed mean MRE was $19.347\%$ versus $16.801\%$ for NN4MGF). This discrepancy led us to ask whether the intrinsic truncation error of the Fixed-Talbot algorithm at $M=5$ was canceling with NN4MGF's approximation error or reinforcing HCN's error. Writing $p_M^{\star}(t)=T_M[G_S](t)$ for the output obtained by applying Degree-$M$ Fixed-Talbot inversion to the exact transform, the prediction error decomposes into two signed components:
\begin{equation}\label{eq:decomposition}
\widehat p_M(t)-p(t)=\underbrace{\bigl(p_M^{\star}(t)-p(t)\bigr)}_{e_{{\rm inv},M}(t)}+\underbrace{\bigl(\widehat p_M(t)-p_M^{\star}(t)\bigr)}_{e_{{\rm net},M}(t)}.
\end{equation}
These two signed components represent intrinsic inversion error and network approximation error, respectively, and can partially cancel or reinforce each other when summed.

\begin{table}[H]
\centering\small
\caption{Fixed-Talbot inversion of the exact product-form transform: Degree 5 versus Degree 8 (MRE, percent).}
\label{tab:exact-inversion}
\setlength{\tabcolsep}{6pt}
\begin{tabular}{lrrrrrr}
\toprule
 & \multicolumn{2}{c}{$t=1{:}5$} & \multicolumn{2}{c}{$t=1{:}8$} & \multicolumn{2}{c}{$t=1{:}10$} \\
\cmidrule(lr){2-3}\cmidrule(lr){4-5}\cmidrule(lr){6-7}
Dimension & $M=5$ & $M=8$ & $M=5$ & $M=8$ & $M=5$ & $M=8$ \\
\midrule
20D & 3.391 & \textbf{0.073} & 4.044 & \textbf{0.122} & 9.430 & \textbf{0.332} \\
30D & 3.218 & \textbf{0.152} & 3.955 & \textbf{0.178} & 8.397 & \textbf{0.375} \\
50D & 34.866 & \textbf{0.507} & 23.898 & \textbf{0.414} & 23.352 & \textbf{0.563} \\
\bottomrule
\end{tabular}
\par\vspace{3pt}\begin{minipage}{\linewidth}\footnotesize Only the exact analytical transform is inverted; no neural network approximation enters this table. Bold entries indicate the lower error in each degree pair.\end{minipage}
\end{table}

To test this hypothesis independently of neural approximation, we applied the Fixed-Talbot procedure directly to the exact product-form transforms in \eqref{eq:product-true}. As shown in Table~\ref{tab:exact-inversion}, Degree $M=5$ indeed incurs substantial intrinsic inversion error in higher dimensions ($4.044\%$ at 20D, $3.955\%$ at 30D, and $23.898\%$ at 50D on $1{:}8$), and at 20D this $M=5$ quadrature bias partially cancels NN4MGF's transform error, making the Degree-5 metric favorable to NN4MGF (Appendix~\ref{app:signed-errors}). Increasing the Talbot degree to $M=8$ reduces the exact-transform inversion error on $1{:}8$ to \textbf{0.122\%} at 20D, \textbf{0.178\%} at 30D, and \textbf{0.414\%} at 50D, while keeping exact-transform MRE below $0.57\%$ on $1{:}5$ and $1{:}10$ across all three dimensions. We therefore adopt Degree~8 as our primary high-dimensional evaluation metric. Evaluating the exact same 20D trained weights under Degree~8---where intrinsic inversion bias is largely removed---HCN achieves lower mean MRE than NN4MGF across all evaluation ranges ($1.441\%$ versus $2.802\%$ on $1{:}5$, $7.632\%$ versus $8.148\%$ on $1{:}8$, and $12.264\%$ versus $18.908\%$ on $1{:}10$; Appendix~\ref{app:degree-sensitivity}). Beyond motivating our evaluation protocol, this diagnostic highlights an important practical lesson for BAR-based transform learning: in high dimensions, sufficient inversion degree is needed to prevent quadrature errors from masking or distorting network performance.

\begin{table}[H]
\centering\small
\caption{Activation comparison at 20D and Degree 8: MRE on $1{:}8$ (percent).}
\label{tab:activation}
\setlength{\tabcolsep}{4pt}
\begin{tabular}{lrr}
\toprule
Seed & Cauchy MRE (\%) & Complex-tanh MRE (\%) \\
\midrule
12345 & 9.003 & 32.990 \\
22345 & 7.966 & 19.823 \\
32345 & 5.927 & 13.502 \\
Mean $\pm$ SD & $7.632 \pm 1.565$ & $22.105 \pm 9.942$ \\
\bottomrule
\end{tabular}
\par\vspace{3pt}\begin{minipage}{\linewidth}\footnotesize All three seeds use the identical coordinate architecture, 64M input budget, and fixed 25,000-update endpoint.\end{minipage}
\end{table}

To examine activation choice within the complex architecture, the 20D complex-tanh control retains the exact same coordinate architecture, widths, seeds, 64M input budget, and 25,000-update endpoint. Both activations are holomorphic away from their poles and omit CR penalties, with Cauchy's trainable activation parameters giving $16{,}388$ real scalars versus $15{,}108$ for complex-tanh. As shown in Table~\ref{tab:activation}, Cauchy achieves lower $\MRE_{1:8}$ on every seed at Degree~8, with a three-seed mean of \textbf{7.632\% $\pm$ 1.565\%} compared with \textbf{22.105\% $\pm$ 9.942\%} for complex-tanh.

\subsection{Late-training stability in 50 dimensions}\label{sec:late-training}
To examine whether the 50D tail accuracy is sustained across late-stage updates rather than isolated to a single checkpoint, we evaluate five archived checkpoints along the 50D trajectory from update $92{,}000$ to $100{,}000$ in increments of $2{,}000$ updates under the fixed Degree-8 protocol.

\begin{table}[H]
\centering\small
\caption{Late-training tail accuracy along the 50D trajectory.}
\label{tab:late-training}
\setlength{\tabcolsep}{4pt}
\begin{tabular}{rrl}
\toprule
Update & $1{:}8$ MRE (\%) & Role \\
\midrule
92,000 & 6.650 & Diagnostic checkpoint \\
94,000 & 5.299 & Diagnostic checkpoint \\
96,000 & 6.845 & Diagnostic checkpoint \\
98,000 & 7.155 & Diagnostic checkpoint \\
100,000 & 6.367 & Reported endpoint \\
\bottomrule
\end{tabular}
\par\vspace{3pt}\begin{minipage}{\linewidth}\footnotesize All checkpoints are evaluated at Degree 8. The main accuracy comparison uses the fixed 100,000-update endpoint.\end{minipage}
\end{table}

Table~\ref{tab:late-training} shows $\MRE_{1:8}$ values between $5.299\%$ (at update $94{,}000$) and $7.155\%$ (at update $98{,}000$), with $6.367\%$ at the prescribed $100{,}000$-update endpoint. Errors fluctuate nonmonotonically within a narrow single-digit range across all five checkpoints, providing evidence of late-training stability along this trajectory under the fixed evaluation rule (see Appendix~\ref{app:late-training} for the corresponding $1{:}5$ interval).

\subsection{An exploratory 10D non-product example}\label{sec:10d}
Beyond two dimensions, non-product RBMs lack closed-form joint stationary densities or Laplace transforms, so evaluating tail accuracy requires steady-state simulation \citep{BlanchetEtAl2021}. With standard time-discretized trajectory simulation, however, generating reliable tail probability references becomes computationally demanding and numerically delicate as dimension grows: simultaneously controlling reflection discretization bias across $d$ faces, burn-in to stationarity, and relative Monte Carlo variance in the tail requires substantial simulation effort, and within our computational budget we do not have a certified simulation ground truth in 30 or 50 dimensions. We therefore conduct our multidimensional non-product evaluation at $d=10$, a scale that is large enough to test multi-coordinate coupling while still permitting a pilot steady-state simulation reference alongside an exact analytical first-moment cross-check.

Specifically, to introduce cross-coordinate dependence while retaining known stationary marginals, consider the 10-dimensional orthogonally reflected RBM
\begin{equation}\label{eq:10d-model}
d=10,\qquad R=I_{10},\qquad \mu=-\bm{1},\qquad \Sigma_{ij}=\frac{2\rho^{|i-j|}}{\sqrt{ij}},\qquad\rho=0.3.
\end{equation}
Here $\Sigma$ is a positive diagonal rescaling of an AR(1) correlation matrix. Because $R=I_{10}$, each coordinate $Z_i$ is marginally a one-dimensional RBM with drift $-1$ and variance rate $2/i$, so $Z_i\sim\operatorname{Exp}(i)$ in stationarity. However, nonzero off-diagonal covariances violate the product-form skew-symmetry condition for $R=I_{10}$ \citep{HarrisonWilliams1987}. Consequently,
\begin{equation}\label{eq:independence}
p_{\rm ind}(t)=1-(1-e^{-t})^{10}
\end{equation}
serves as an independent-marginal comparator rather than the correlated system's true sum-tail reference.

\begin{figure}[!htb]\centering
\includegraphics[width=.80\linewidth]{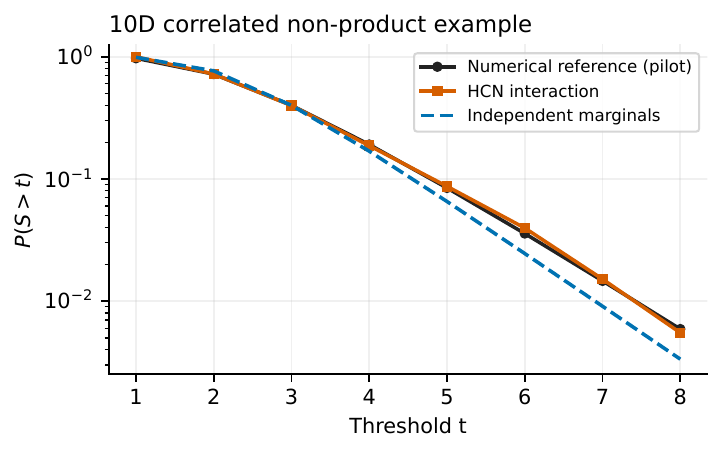}
\caption{Exploratory 10D non-product tails at Degree 8: HCN interaction model compared with the pilot simulation reference and independent-marginal tail.}\label{fig:10d}
\end{figure}

We evaluate the HCN interaction model (incorporating both $g_{\rm int}$ and $g_{\rm bd}$) at 25,000 updates and Degree~8. As an independent analytical diagnostic of the learned interior transform at the origin, evaluating $\widehat G_S(0)=-\sum_{j=1}^{10}\partial_j f_0(0)$ yields a learned mean total workload of $2.924299$, which differs from the exact analytical first moment $\E_\pi[S]=H_{10}=\sum_{j=1}^{10}1/j\approx 2.928968$ by only \textbf{0.159\%} (Appendix~\ref{app:10d-reference}). For the sum tail, the numerical reference is a pilot steady-state simulation estimate for which a complete discretization and sampling error bound has not yet been established; evaluated against this pilot reference, HCN achieves an $\MRE_{1:8}$ of \textbf{3.684\%}, compared with \textbf{19.409\%} for the independent-marginal comparator $p_{\rm ind}(t)$. At $t=8$, the pilot reference, HCN prediction, and independent-marginal probabilities are $0.005905$, $0.005512$, and $0.003350$, respectively (Figure~\ref{fig:10d}). Relative to the independent-marginal comparator, this result provides evidence that the anchored interaction representation captures cross-coordinate dependence effects on the sum tail in a multidimensional non-product setting.

\section{Discussion}\label{sec:discussion}
\textbf{Structural holomorphy and memory-controlled computation.} On pole-free domains, native complex Cauchy subnetworks are holomorphic by construction, eliminating sampled Cauchy--Riemann (CR) penalties and the first-order input-derivative graphs they require across complex collocation points. In a $d$-dimensional RBM, however, replacing the activation function alone is not sufficient to resolve the high-dimensional memory bottleneck: each parameter update still evaluates BAR and real-axis monotonicity constraints across one interior transform and $d$ boundary transforms, so practical memory demand also depends on derivative implementation and graph scheduling. Pairing native complex Cauchy representations with closed-form real-axis derivatives, coordinate--face chunking, and sequential three-branch backpropagation yields the runtime and memory reductions observed in the 30D H100 profile. Because HCN and NN4MGF differ in their architectures, batch sizes, scheduled update counts, derivative calculations, and graph schedules, those 30D ratios characterize the complete configured pipelines jointly rather than isolating CR penalty removal alone. In practice, combining this memory-controlled implementation with dimension-dependent batch schedules and the 64M base-input budget allows all tested HCN benchmarks to run on a single GPU and enables the 50D product-form run on a single laptop GPU; across the five recorded checkpoints from $92{,}000$ to $100{,}000$ updates in Section~\ref{sec:late-training}, $\MRE_{1:8}$ remains between $5.299\%$ and $7.155\%$, with the fixed $100{,}000$-update endpoint ($6.367\%$) reported as the primary result.

\textbf{Constrained approximation and non-product dependence.} Theorem~\ref{thm:approx} characterizes fixed-dimension representation capacity under holomorphic-log assumptions: parameter-shared additive coordinate blocks simultaneously approximate function values and first derivatives of separable log transforms on compact polydiscs, while anchored interaction subnetworks approximate general holomorphic targets satisfying exact origin normalization and boundary-coordinate independence. For the coordinate-only architecture used in the product-form benchmarks, sharing subnetwork weights across coordinates and faces avoids replicating full networks for each coordinate or boundary face, so at fixed width only the coordinate and face embedding parameters grow linearly with dimension ($P(d)=15{,}108+64d$ in Appendix~\ref{app:parameter-count}). Conversely, the mixed-derivative lower bound \eqref{eq:obstruction} shows why coordinate-only representations are structurally limited on correlated systems: whenever stationary coordinates have nonzero cross-covariance, the true joint log transform has nonzero mixed partial derivatives at the origin, whereas any purely additive representation has identically zero mixed partials and therefore incurs an irreducible uniform approximation error on the joint log transform over compact polydiscs that increasing width alone cannot remove. Although \eqref{eq:obstruction} bounds uniform log-transform error rather than directly bounding sum-tail MRE after numerical inversion, the empirical improvement on the 2D Harrison benchmark ($1.422\%$ versus $6.684\%$ $\MRE_{1:5}$) and the results on the exploratory 10D correlated RBM are consistent with this structural distinction. In the 10D study, because the trajectory reference is a pilot estimate without a certified discretization bound, we treat the tail comparison as exploratory multi-dimensional evidence; meanwhile, the $0.159\%$ relative discrepancy between the learned origin derivative sum and the analytical mean workload $H_{10}$ provides an independent first-moment consistency check on the interior transform at the origin, even though matching the sum mean alone does not certify cross-coordinate covariance.

\textbf{Separating transform approximation from numerical inversion.} In BAR-based solvers, the neural network learns complex log-Laplace transforms, and tail probabilities are subsequently computed via Fixed-Talbot inversion. Accordingly, \eqref{eq:decomposition} decomposes the final tail error $\widehat p_M - p$ into two signed components: the intrinsic inversion error $e_{{\rm inv},M}=p_M^\star - p$ incurred by applying Talbot quadrature to the exact transform, and the network approximation error $e_{{\rm net},M}=\widehat p_M - p_M^\star$ from replacing the exact transform with the learned network at the complex Talbot nodes. Because both components are signed, they can partially cancel or reinforce each other across thresholds. As a result, even when increasing the Talbot degree substantially reduces exact-transform inversion error---which motivates our choice of Degree~8 in 20 to 50 dimensions---changing the complex evaluation nodes and weights can occasionally expose or amplify network errors that were previously masked by sign cancellation (Appendix~\ref{app:degree-sensitivity}). Tail accuracy should therefore be evaluated for a specific pair of learned transform and inversion rule: where analytical transforms are available, exact-transform diagnostics should be used first to select an appropriate inversion degree, after which learned models are compared at fixed training endpoints under explicit evaluation rules.

\textbf{Scope and future directions.} Several extensions remain open. First, Theorem~\ref{thm:approx} establishes the existence of pole-free approximations at finite but unrestricted width on compact polydiscs under a nonvanishing-transform assumption; it does not by itself guarantee optimization convergence at a fixed trained width, nor do structural holomorphicity and finitely many sampled BAR and real-axis constraints certify that a learned transform corresponds to a globally valid probability measure. Second, because the trainable Cauchy scale parameters $s_r$ govern complex pole locations, relating learned pole distributions directly to the geometry of the Fixed-Talbot contour could strengthen the theoretical foundation for deep-tail inversion. Finally, developing steady-state simulation references with certified error bounds for larger correlated RBMs and conducting single-factor resource ablations will further clarify the scope of holomorphic neural solvers and the individual contributions of their design components.

\section{Conclusion}\label{sec:conclusion}
This paper develops Holomorphic Cauchy Networks (HCNs) for stationary RBM tail computation within NN4MGF's BAR-learning and shared-coordinate framework. By embedding rational Cauchy activations into native complex-valued subnetworks, HCN enforces holomorphicity on pole-free domains by construction without sampled Cauchy--Riemann penalties, while exact origin anchors and boundary-coordinate exclusion enforce normalization and face-independence identities algebraically. Under holomorphic-log assumptions, we establish simultaneous value and first-derivative approximation for separable targets with shared coordinate blocks and for general constrained holomorphic targets with anchored interaction blocks on compact polydiscs, and derive a mixed-derivative lower bound showing why coordinate-only representations cannot approximate joint log transforms arbitrarily well when stationary coordinates have nonzero cross-covariance. Combined with closed-form real-axis derivatives, sequential branch backpropagation, and dimension-dependent batch schedules, this design substantially reduces GPU memory demand and update time, allowing all tested benchmarks to run on a single GPU and scaling the 50-dimensional product-form benchmark to a single laptop GPU with stable accuracy across late-training checkpoints. Across degree-matched comparisons over the primary evaluation ranges, HCN attains lower reported mean relative errors than NN4MGF, including reducing error from $6.684\%$ to $1.422\%$ on the two-dimensional Harrison non-product benchmark, while an exploratory ten-dimensional correlated example illustrates the practical applicability of the interaction extension beyond two dimensions. Together with exact-transform and signed-error diagnostics, these findings support the joint design of holomorphic network architectures, memory-aware training schedules, and numerical Laplace inversion for BAR-based stationary-tail solvers.

\FloatBarrier
\bibliographystyle{plainnat}
\bibliography{references}

\clearpage
\appendix
\numberwithin{equation}{section}
\numberwithin{table}{section}
\numberwithin{figure}{section}
\renewcommand{\thelemma}{\Alph{section}.\arabic{lemma}}
\section*{Appendices}
\noindent The appendices provide mathematical details, constrained approximation proofs, and reference formulas (Appendix A), implementation settings (B), inversion diagnostics (C), supplementary results (D), and reproducibility information (E).
\section{Mathematical and Architectural Details}
\label{app:math}

\subsection{Stationary balance and boundary measures}
\label{app:bar-derivation}

For $h_\theta(z)=e^{-\theta^{\mathsf T}z}$, the interior generator and reflection derivatives are
\begin{equation}
 \frac{\mathcal Ah_\theta(z)}{h_\theta(z)}
 =\tfrac12\theta^{\mathsf T}\Sigma\theta-\mu^{\mathsf T}\theta,
 \qquad
 \frac{\nabla h_\theta(z)^{\mathsf T}R^{(k)}}{h_\theta(z)}
 =-\theta^{\mathsf T}R^{(k)}.
 \label{eq:app-generator}
\end{equation}
Under the integrability conditions in Section~\ref{sec:rbm}, stationarity and It\^o's formula imply
\begin{equation}
 0=\left(\tfrac12\theta^{\mathsf T}\Sigma\theta-\mu^{\mathsf T}\theta\right)
       \phi_0(\theta)
   -\sum_{k=1}^d\theta^{\mathsf T}R^{(k)}\phi_k(\theta).
\end{equation}
This gives \eqref{eq:bar} with the signs in \eqref{eq:gammas}. The same stationarity argument applied to the coordinates gives $Rm=-\mu$. The boundary measure
\[
 \nu_k(A)=\mathbb E_\pi\!\left[\int_0^1\mathbf1_{\{Z(u)\in A\}}\,dY_k(u)\right]
\]
has total mass $m_k$ and support on $\{z:z_k=0\}$. Thus $\phi_k$ is a finite-measure Laplace transform with prescribed mass $m_k$ at the origin. In the product-form family, $m_k=k$; in the 2D Harrison and orthogonally reflected 10D models, every $m_k=1$.

The boundary-isolation directions in \eqref{eq:isolated} satisfy
\begin{equation}
 R^{\mathsf T}v_k=\frac{u_k}{(R^{-\mathsf T})_{kk}},
\end{equation}
which verifies that all boundary coefficients except the $k$th vanish at $y_kv_k$. For $R=I$, the isolated argument simplifies to $y_ku_k$.

\subsection{Proofs of the constrained approximation results}
\label{app:proofs}

We use the nested polydisc domains and first-derivative norm $\|\cdot\|_{1,K}$ defined in Section~\ref{sec:approx}. All derivatives in this subsection are complex derivatives, and the construction uses the exact rational Cauchy activation and scalar block parameterization in \eqref{eq:cauchy}--\eqref{eq:block}.

\begin{lemma}[Polynomial approximation by Cauchy blocks]\label{lem:poly-approx}
Let $P$ be a multivariate polynomial on $\CC^p$, $L\subset\CC^p$ a compact set, and $s_*>s_{\min}$. For every $\delta>0$, there exists a finite-width Cauchy block $q$ whose value and first-derivative errors relative to $P$ are strictly less than $\delta$ on $L$. Moreover, every hidden scale parameter can be set to $s_r=s_*$, with every raw denominator satisfying $|z^2+s_*^2|\ge \tfrac34 s_*^2$ on a neighborhood of $L$.
\end{lemma}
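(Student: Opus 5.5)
The plan is to reduce the multivariate problem to univariate monomials of linear forms, and then to extract each monomial from a single rescaled Cauchy activation by roots-of-unity filtering. Fix $s_r=s_*$ and take $a_r=b_r=1$, so every hidden unit uses the same function
\[
g(t)=\frac{t+1}{t^2+s_*^2}.
\]
This function is holomorphic on $|t|<s_*$. Its Taylor series there is
\[
g(t)=\sum_{n\ge0}g_nt^n,\qquad g_{2n}=(-1)^ns_*^{-2n-2},\qquad g_{2n+1}=(-1)^ns_*^{-2n-2},
\]
so \emph{every} Taylor coefficient is nonzero. This is the only property of the activation the construction will use.

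\textbf{Step 1 (polarization).} Write $P$ as its constant term plus homogeneous parts of degrees $1\le m\le \deg P$. By the standard polarization identity, each homogeneous polynomial of degree $m$ on $\CC^p$ is a finite linear combination of powers $(\ell_i^{\mathsf T}x)^m$ with $\ell_i\in\CC^p$. Hence $P(x)=c_0+\sum_{i=1}^{I}\alpha_i(\ell_i^{\mathsf T}x)^{m_i}$. Because hidden units of Cauchy blocks can be concatenated, finite sums of blocks are again blocks, and the constant $c_0$ is absorbed into $c_2$. It therefore suffices to approximate a single term $u^m$ with $u=\ell^{\mathsf T}x$, in value and gradient, uniformly on a compact neighborhood $L'$ of $L$. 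Let $\rho=\max_{L'}|u|+1$.

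\textbf{Step 2 (filtering).} Fix $N>m$ and $\omega=e^{2\pi\mathrm i/N}$. Use hidden units $k=0,\dots,N-1$ with input weight $\varepsilon\omega^k\ell$, zero bias, and output weight $\omega^{-km}/(N g_m\varepsilon^m)$. The block output is
\[
q(x)=\frac{1}{Ng_m\varepsilon^m}\sum_{k=0}^{N-1}\omega^{-km}g(\varepsilon\omega^k u)=u^m+\sum_{j\ge1}\frac{g_{m+jN}}{g_m}\,\varepsilon^{jN}u^{m+jN}.
\]
The filtering kills every term of order $n\not\equiv m\pmod N$, including all lower orders.

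\textbf{Step 3 (margin and error bounds).} Choose $\varepsilon\le s_*/(2\rho)$. Then $|\varepsilon\omega^k u|\le s_*/2$ on $L'$, so $|t^2+s_*^2|\ge s_*^2-s_*^2/4=\tfrac34 s_*^2$ there; this is the claimed denominator margin on a neighborhood of $L$. Use $|g_n|\le s_*^{-n-1}$. Since $\varepsilon\rho\le s_*/2$, the remainder is bounded by
\[
|g_m|^{-1}\sum_{j\ge1}s_*^{-m-jN-1}(\varepsilon\rho)^{m+jN}\le C_m\,(\varepsilon\rho/s_*)^N .
\]
This tends to $0$ as $\varepsilon\to0$, and the bound is uniform on $L'$. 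The remainder is holomorphic in $x$ on $L'$. The derivative bound on $L$ then follows either from Cauchy estimates on polydiscs of a fixed radius contained in $L'$, or by differentiating the series termwise, which only adds factors $(m+jN)|\ell|$ and is absorbed by the geometric decay. Summing the finitely many terms from Step 1 with $\delta/I$ tolerances gives the lemma.

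\textbf{Main obstacle.} The delicate point is the competition between the output weight $\varepsilon^{-m}$, which blows up, and the remainder $\varepsilon^{m+N}$. It is resolved because the filtering removes all orders below $m+N$ except $m$ itself, so the net error scales like $\varepsilon^N$. This relies on $g_m\ne0$, which is why $a=b=1$ is chosen rather than $b=0$. A second point to check is that a single $\varepsilon$ works simultaneously for all terms and on the neighborhood $L'$, so that the margin $\tfrac34 s_*^2$ holds on a neighborhood of $L$ and not merely on $L$.
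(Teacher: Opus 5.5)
Your proof is correct and follows essentially the paper's route: polarization to powers of linear forms, roots-of-unity filtering of rescaled Cauchy units to isolate $u^m$ with error $O((\varepsilon\rho/s_*)^N)$, preactivations confined to $|z|\le s_*/2$ for the $\tfrac34 s_*^2$ margin, and Cauchy estimates for the derivatives. The only variation is in the activation coefficients: the paper takes $a_r=-\mathrm{i}s_*$, $b_r=s_*^2$ with imaginary preactivations, so each unit is exactly $1/(1-t)$ and the filtered sum is $u^n/(1-(hu)^N)$ in closed form, whereas you keep $a_r=b_r=1$ and divide by the nonzero Taylor coefficient $g_m$. Two small slips that do not affect the conclusion: since $|g_{2k}|=s_*^{-2k-2}$, the bound $|g_n|\le s_*^{-n-1}$ fails for even $n$ when $s_*<1$ (use $|g_n|\le\max\{1,s_*^{-1}\}\,s_*^{-n-1}$), and the summand in your remainder estimate should be $|g_{m+jN}|\,\varepsilon^{jN}\rho^{m+jN}$ rather than $(\varepsilon\rho)^{m+jN}$.
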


\begin{proof}
Set $a_r=-\mathrm{i}s_*$ and $b_r=s_*^2$ in the rational Cauchy activation \eqref{eq:cauchy}. For preactivation $z=\mathrm{i}s_* t$ with $|t|\le 1/2$, factoring the denominator gives
\begin{equation}\label{eq:app-pole-margin}
\sigma(\mathrm{i}s_* t)=\frac{1+t}{1-t^2}=\frac{1}{1-t},\qquad
|z^2+s_*^2|=s_*^2|1-t^2|\ge\frac34 s_*^2.
\end{equation}
Both raw poles $t=\pm 1$ lie strictly outside $|t|\le 1/2$, so the implemented fraction is smooth and bounded away from zero throughout the disk, and $s_*$ is realizable via $\rho_r=\log(\exp(s_*-s_{\min})-1)$ since $s_*>s_{\min}$.

For any integer degree $n\ge 1$, choose $N>n$, step $h>0$, and primitive root of unity $\omega=e^{2\pi\mathrm{i}/N}$. By the geometric series and roots-of-unity orthogonality filter,
\begin{equation}\label{eq:app-roots-filter}
\frac{1}{Nh^n}\sum_{\ell=0}^{N-1}\frac{\omega^{-\ell n}}{1-h\omega^\ell u}
=\sum_{q=0}^\infty h^{qN}u^{n+qN}
=\frac{u^n}{1-(hu)^N},\qquad |hu|<1.
\end{equation}
For $|u|\le A$ and $hA<1$, the uniform error relative to the monomial $u^n$ is bounded by
\begin{equation}\label{eq:app-roots-err}
A^n\frac{(hA)^N}{1-(hA)^N}\longrightarrow 0\qquad\text{as }h\downarrow 0.
\end{equation}
Each summand in \eqref{eq:app-roots-filter} is realized by a single Cauchy unit using preactivation $\mathrm{i}s_* h\omega^\ell u$ and output weight $\omega^{-\ell n}/(Nh^n)$. Any affine linear form $u(x)$ is realized by the complex input layer, and choosing $hA\le 1/2$ guarantees the uniform denominator margin \eqref{eq:app-pole-margin}.

Finally, powers of linear forms span all multivariate polynomials via the polarization identity:
\begin{equation}\label{eq:app-polarization}
x_1\cdots x_n=\frac{1}{n!}\sum_{S\subseteq\{1,\ldots,n\}}(-1)^{n-|S|}\biggl(\sum_{j\in S}x_j\biggr)^n,
\end{equation}
which remains valid when variables are repeated coordinates. Summing the finitely many resulting Cauchy units approximates $P$ uniformly on any compact polydisc, with constant terms absorbed into the output bias $c_2$. Enclosing $L$ in the interior of a slightly larger compact polydisc, uniform value convergence on the outer polydisc implies simultaneous first-derivative convergence on $L$ by Cauchy's integral estimates, while choosing $h$ sufficiently small so that $|h\omega^\ell u|\le 1/2$ on the outer polydisc preserves the denominator margin $\tfrac34 s_*^2$.
\end{proof}

\noindent\textbf{Interpolation of shared labels.} For any finite family of pairwise distinct boundary embeddings $b_k\in\CC^{p_b}$, choose a complex linear functional $\lambda$ such that the scalar projections $\beta_k=\lambda(b_k)$ are pairwise distinct (which exists since the finitely many excluded equalities define proper hyperplanes). The univariate Lagrange polynomials
\[
D_k(b)=\prod_{\ell\ne k}\frac{\lambda(b)-\beta_\ell}{\beta_k-\beta_\ell}
\]
satisfy $D_k(b_\ell)=\mathbf1_{k=\ell}$. Consequently, for any family of polynomials $P_k(z)$, the single joint polynomial $Q(z,b)=\sum_k D_k(b)P_k(z)$ interpolates all slices simultaneously, and Lemma~\ref{lem:poly-approx} approximates $Q$ and its $z$-derivatives using a single shared Cauchy block on any compact envelope of the labelled inputs. An identical construction provides interpolating polynomials $E_j(e)$ for the distinct coordinate embeddings $e_j\in\CC^{p_e}$.

\begin{proof}[Proof of Theorem~\ref{thm:approx}(i)]
By Taylor's theorem and Cauchy's integral estimates on an intermediate polydisc inside $U$, univariate polynomials $P_j$ and $P_{kj}$ approximate each target component $h_j$ and $h_{kj}$, together with its first derivative, on the corresponding closed coordinate disc of $K$. Forming the joint interpolating polynomials
\[
Q_{\rm int}(z,e)=\sum_{j=1}^d E_j(e)P_j(z),\qquad
Q_{\rm bd}(z,e,b)=\sum_{k=1}^d\sum_{j\ne k}D_k(b)E_j(e)P_{kj}(z),
\]
we apply Lemma~\ref{lem:poly-approx} on a compact envelope containing all required labelled inputs and origin anchors (since $0$ lies in every coordinate disc of $K$) so that each slice of $q_{\rm int}$ and $q_{\rm bd}$ has value and first-derivative error below $\delta>0$. For any slice with target satisfying $h(0)=0$, writing $E(z)=q(z)-h(z)$ gives anchored error $E(z)-E(0)$, which is bounded by $2\delta$ in value and $\delta$ in derivative. Summing at most $d$ coordinate slices yields total value error below $2d\delta$ and coordinate derivative error below $\delta$; choosing $\delta<\eta/(2d)$ establishes \eqref{eq:approx-bound} while preserving \eqref{eq:identities} and the uniform denominator margin.
\end{proof}

\begin{proof}[Proof of Theorem~\ref{thm:approx}(ii)]
Let $B_0,B_1,\ldots,B_d$ denote the fixed coordinate-backbone functions (including $\log m_k$), which are holomorphic on $U$ and satisfy \eqref{eq:identities}. The residual targets $A_k=F_k-B_k$ are holomorphic on $U$ and vanish at the origin, with $A_k$ depending only on $\theta_{-k}\in K_{-k}$ for $k\ge 1$. Choose multivariate Taylor polynomials $P_0,P_1,\ldots,P_d$ approximating $A_0,A_1,\ldots,A_d$ to within $\delta$ in $\|\cdot\|_{1,K}$ on $K,K_{-1},\ldots,K_{-d}$. Applying Lemma~\ref{lem:poly-approx} approximates $P_0$ by $g_{\rm int}$ within $\delta$, and interpolating $P_1,\ldots,P_d$ via $Q(z,b)=\sum_{k=1}^d D_k(b)P_k(z)$ on an enclosing compact polydisc in $\CC^{d-1}\times\CC^{p_b}$ approximates all boundary residuals by a single shared block $g_{\rm bd}$ within $\delta$. On each slice, the unanchored error $E=g-A$ has value and first-derivative bounds below $2\delta$; since $A(0)=0$, origin anchoring yields error $E(\theta)-E(0)$ bounded by $4\delta$ in value and $2\delta$ in derivative. Setting $\delta<\eta/4$ proves \eqref{eq:approx-bound}.
\end{proof}

\noindent\textbf{Coordinate-only obstruction.} Applying the two-variable Cauchy integral formula to $F_0-f_0^{\rm coord}$ along the $i$th and $j$th coordinate circles of $K$ (holding other coordinates at $c$) and using $\partial_i\partial_j f_0^{\rm coord}\equiv 0$ for $i\ne j$ yields $|\partial_i\partial_j F_0(c)|\le (r_i r_j)^{-1}\sup_K|F_0-f_0^{\rm coord}|$, which rearranges to \eqref{eq:obstruction}. Since $\phi_0(0)=1$, $\partial_i\phi_0(0)=-\E_\pi[Z_i]$, and $\partial_i\partial_j\phi_0(0)=\E_\pi[Z_i Z_j]$, differentiating $F_0=\log\phi_0$ at $c=0$ gives $\partial_i\partial_j\log\phi_0(0)=\operatorname{Cov}_\pi(Z_i,Z_j)$.

\subsection{Consequences for transforms and the training constraints}
\label{app:loss-consequences}

Let $\phi_k=e^{F_k}$ and $\widehat\phi_k=e^{f_k}$, and suppose the log-transform approximation bound \eqref{eq:approx-bound} holds with tolerance $\eta>0$. Writing $E_k=f_k-F_k$ and using $|e^{E_k}-1|\le e^{|E_k|}-1$ on $K$ yields
\begin{align}
 |\widehat\phi_k-\phi_k|&\le |\phi_k|(e^\eta-1),\label{eq:app-val-bound}\\
 |\partial_j\widehat\phi_k-\partial_j\phi_k|&\le |\phi_k|\bigl[(e^\eta-1)|\partial_j F_k|+e^\eta\eta\bigr].\label{eq:app-deriv-bound}
\end{align}
When the exact transforms satisfy the BAR identity \eqref{eq:bar} on $K$, subtracting \eqref{eq:bar} gives
\begin{equation}\label{eq:app-bar-bound}
 \Biggl|\gamma_0\widehat\phi_0-\sum_{k=1}^d\gamma_k\widehat\phi_k\Biggr|
 \le (e^\eta-1)\Biggl(|\gamma_0\phi_0|+\sum_{k=1}^d|\gamma_k\phi_k|\Biggr),
\end{equation}
whose right-hand multiplier is uniformly bounded on the compact set $K$. Under the common stabilizing shift $\nu_f(\theta)=\max_{0\le k\le d}\{\log(|\gamma_k(\theta)|+\varepsilon_\gamma)+\Re f_k(\theta)\}$, uniform convergence $f_k\to F_k$ on $K$ bounds $e^{-\nu_f}$ uniformly above, while the denominator of $\mathcal E$ in \eqref{eq:barloss} is bounded below by $\varepsilon^2$ and the weights $w_n\in[0.5,2]$ are bounded. Consequently, \eqref{eq:app-bar-bound} implies $\mathcal L_{\rm BAR}\to 0$ and $\mathcal L_{\rm bd}\to 0$ as $\eta\downarrow 0$. Likewise, on any real sample support in $K$ where $F_k$ is real and coordinatewise nonincreasing, $|\Im f_k|\le\eta$ and $\max\{0,\Re\partial_j f_k\}\le\eta$, so $\mathcal L_{\rm mono}\le\eta^2$ and $\mathcal L_{\rm imag}\le\eta^2$. Thus all four loss components in \eqref{eq:full-loss} tend to zero simultaneously as $\eta\downarrow 0$.

\subsection{Parameter counts and interaction allocations}
\label{app:parameter-count}

A scalar Cauchy block with $p$ complex inputs and $H$ hidden units contains $pH$ complex input weights, $H$ complex hidden biases, $H$ complex output weights, one complex output bias, two complex activation coefficients per unit, and one real scale parameter per unit. Counting each complex scalar as two real scalars gives
\begin{equation}
 P_{\rm Cauchy}(p,H)=2(pH+H+H+1+2H)+H
                  =H(2p+9)+2.
 \label{eq:app-parameter-count}
\end{equation}
With $p_e=p_b=16$, the coordinate and boundary embeddings contribute $64d$ real parameters, and the coordinate interior and boundary input dimensions are $17$ and $33$, respectively. Setting $H=128$ yields $15{,}108+64d$ real parameters. A complex-tanh block has $H(2p+4)+2$ real parameters since its activation contains no trainable coefficients.

\begin{table}[!htbp]
\centering\small
\caption{Hidden widths of the scalar subnetworks and real scalar parameter counts.}
\label{tab:app-parameters}
\begin{tabular}{lrrrrr}
\toprule
Model & $q_{\rm int}$ & $q_{\rm bd}$ & $g_{\rm int}$ & $g_{\rm bd}$ & Parameters\\
\midrule
2D product & 128 & 128 & --- & --- & 15,236\\
2D Harrison & 95 & 128 & 109 & --- & 15,236\\
10D non-product & 95 & 128 & 109 & 109 & 23,925\\
20D product, Cauchy & 128 & 128 & --- & --- & 16,388\\
20D product, tanh & 128 & 128 & --- & --- & 15,108\\
30D product & 128 & 128 & --- & --- & 17,028\\
50D product & 128 & 128 & --- & --- & 18,308\\
\bottomrule
\end{tabular}
\par\vspace{3pt}\begin{minipage}{\linewidth}\footnotesize The four middle columns list hidden-unit counts for the coordinate interior, coordinate boundary, interior interaction, and boundary interaction blocks. Parameter totals include embeddings and count each complex parameter as two real scalars.\end{minipage}
\end{table}

In the 2D Harrison model, the two embedding arrays contribute 128 real parameters, while $q_{\rm int}$, $q_{\rm bd}$, and $g_{\rm int}$ contribute $4{,}087$, $9{,}602$, and $1{,}419$ parameters, matching the $15{,}236$-parameter budget of the 2D product-form model. In the 10D correlated model, the corresponding contributions are $640+4{,}087+9{,}602+3{,}163+6{,}433=23{,}925$, where the final term is the shared boundary interaction block $g_{\rm bd}$ with input dimension $9+16=25$.

\subsection{Analytical and density-based references}
\label{app:reference-formulas}

For the product-form family, $R^{-1}\mathbf1=(1,\ldots,d)^{\mathsf T}$ and $\Sigma=R+R^{\mathsf T}$, giving independent exponential stationary coordinates with rates $1,\ldots,d$ (Harrison and Williams, 1987). Letting $F_d(t)=(1-e^{-t})^d$ for $t\ge0$ and substituting $u=e^{-t}$ yields
\begin{equation}
 \int_0^\infty e^{-st}\,dF_d(t)
 =d\int_0^1 u^s(1-u)^{d-1}\,du
 =\frac{\Gamma(d+1)\Gamma(s+1)}{\Gamma(s+d+1)}
 =\prod_{i=1}^d\frac{i}{i+s},\quad\operatorname{Re}s>0,
\end{equation}
confirming \eqref{eq:product-true}. In floating-point evaluation, computing $p_d(t)=-\operatorname{expm1}\{d\operatorname{log1p}(-e^{-t})\}$ avoids catastrophic cancellation at large thresholds.

For the 2D Harrison benchmark \eqref{eq:harrison}, the exact stationary density (Dai and Harrison, 1992; Dai and Zhang, 2026) in polar coordinates $(z_1,z_2)=(r\cos\alpha,r\sin\alpha)$ is
\begin{equation}
 \varrho_{\rm H}(r\cos\alpha,r\sin\alpha)
 =\frac{2^{3/2}}{\sqrt\pi}\,r^{-1/2}
   e^{-(1+\cos\alpha)r}\cos(\alpha/2),
 \quad r>0,\quad 0\le\alpha\le\pi/2.
 \label{eq:app-harrison-density}
\end{equation}
With area element $dz_1\,dz_2=r\,dr\,d\alpha$, the tail event $z_1+z_2>t$ imposes the radial lower limit $r_0(\alpha)=t/(\cos\alpha+\sin\alpha)$. Integrating out $r$ analytically via the substitution $u=(1+\cos\alpha)r$ yields the one-dimensional integral representation
\begin{equation}
 p_{\rm H}(t)=\frac{2^{3/2}}{\sqrt\pi}
 \int_0^{\pi/2}
 \frac{\cos(\alpha/2)}{(1+\cos\alpha)^{3/2}}
 \Gamma\!\left(\frac32,
       \frac{(1+\cos\alpha)t}{\cos\alpha+\sin\alpha}\right)d\alpha,
 \label{eq:app-harrison-reference}
\end{equation}
where $\Gamma(a,x)=\int_x^\infty u^{a-1}e^{-u}\,du$ is the upper incomplete gamma function. High-precision quadrature of \eqref{eq:app-harrison-reference} supplies the exact reference probabilities at $t=1,\ldots,5$, agreeing with the rounded values in Dai and Zhang (2026) to within $6\times10^{-8}$.

\section{Training and Implementation Details}
\label{app:implementation}

\subsection{Stable residuals}
\label{app:stable-loss}
For $\gamma_{nk}=\gamma_k(\theta^{(n)})$ and $f_{nk}=f_k(\theta^{(n)})$, the common real shift in \eqref{eq:kappas} is
\begin{equation}
 \nu_n=\max_{0\le k\le d}
 \{\log(|\gamma_{nk}|+\varepsilon_\gamma)+\operatorname{Re}f_{nk}\},
 \qquad\varepsilon_\gamma=10^{-12}.
 \label{eq:app-bar-shift}
\end{equation}
This normalization bounds each scaled term $|\kappa_{nk}|$ by one in exact arithmetic while preserving the zero-residual locus of the BAR equation. The relative residual in \eqref{eq:barloss} uses $\varepsilon=10^{-8}$. For $\ell_n=\mathcal E(A_n,B_n)$ and the stop-gradient operator $\operatorname{sg}$, the adaptive sample weights are defined by
\begin{align}
 u_n&=(\operatorname{sg}(\ell_n)+\varepsilon_w)^{0.3},\qquad \varepsilon_w=10^{-12},\\
 w_n&=\operatorname{sg}\!\left[
 \min\left\{2,\max\left\{0.5,
 \frac{u_n}{N_{\rm BAR}^{-1}\sum_m u_m+\varepsilon_w}\right\}\right\}\right].
 \label{eq:app-adaptive-weights}
\end{align}
At isolated boundary arguments \eqref{eq:isolated}, the stabilizing shift takes the maximum over the two active terms $\log(|\gamma_j|+\varepsilon_\gamma)+\operatorname{Re}f_j$ for $j\in\{0,k\}$, and the pairwise boundary loss \eqref{eq:bdloss} takes the unweighted mean across samples and boundary faces.

\subsection{Sampling and numerical settings}
Following \citet{DaiZhang2026}, the general complex sampler draws points from random nested boxes on a bounded domain $\Theta=[\underline{\theta}^{\rm Re},\overline{\theta}^{\rm Re}]^d\times[-\overline{\theta}^{\rm Im},\overline{\theta}^{\rm Im}]^d$. For each base input, we independently draw $U,V\sim\operatorname{Unif}(0,1)$, set the box thresholds $A=\underline{\theta}^{\rm Re}+(\overline{\theta}^{\rm Re}-\underline{\theta}^{\rm Re})U$ and $C=\overline{\theta}^{\rm Im}V$, and draw coordinates conditionally independently as
\begin{equation}
 \operatorname{Re}\theta_j\mid A\sim\operatorname{Unif}(\underline{\theta}^{\rm Re},A),
 \qquad \operatorname{Im}\theta_j\mid C\sim\operatorname{Unif}(-C,C),
 \qquad j=1,\ldots,d.
 \label{eq:nested-box-sampler}
\end{equation}
Sharing $(A,C)$ across coordinates concentrates a higher fraction of samples in corner regimes where all coordinates simultaneously take small or large values. Complex-diagonal inputs repeat a single univariate complex draw across all $d$ coordinates; near-diagonal inputs add independent real and imaginary perturbations from $[-0.05,0.05]$ and project back into $\Theta$; and nonnegative real-diagonal inputs take $\overline{\theta}^{\rm Re}U^2\mathbf1$. Each interior batch $B_{\rm BAR}$ allocates integer parts of 50\%, 20\%, and 20\% to the first three components and assigns the remainder to the real diagonal. Boundary base vectors use the general complex sampler before applying \eqref{eq:isolated}, and real-axis samples use the real-part projection of \eqref{eq:nested-box-sampler}.

\begin{table}[!htbp]
\centering\small
\caption{Implementation hyperparameters and numerical settings.}
\label{tab:app-implementation}
\begin{tabularx}{\linewidth}{lX}
\toprule
Setting & Value\\
\midrule
Arithmetic & \texttt{complex128} network values and weights; \texttt{float64} real quantities.\\
Embeddings & $p_e=p_b=16$ complex components per coordinate and boundary face, initialized $\mathcal N(0,0.01^2)$.\\
Cauchy initialization & $a_r=b_r=0.1$, $s_r=0.5$, $s_{\min}=10^{-3}$; $\rho_r=\log(\exp(0.5-s_{\min})-1)$.\\
Optimizer & Adam, learning rate $5\times10^{-5}$ for the first 80\% of updates, $10^{-5}$ thereafter.\\
Loss and gradients & Unit weights $(1,1,1,1)$ in \eqref{eq:full-loss}; global accumulated gradient norm clipped at $1.0$.\\
Pair chunk cap & 32,768 scalar inputs in 2D and product-form models; 4,096 in 10D.\\
Backward schedule & Combined pass in 2D and 20D; sequential three-branch pass in 30D and 50D; facewise boundary accumulation in 10D.\\
\bottomrule
\end{tabularx}
\end{table}

Complex affine layers are implemented via \texttt{torch.nn.Linear(..., dtype=complex128)} with standard Kaiming fan-in initialization. During sequential branch accumulation (Algorithm~1), all loss branches evaluate the exact same parameter vector $\omega$ prior to global gradient clipping and the single Adam update.

\subsection{Preflight verification checks}
\label{app:pole-checks}
Before training commences, automated preflight diagnostics verify numerical well-posedness and closed-form Jacobian accuracy at initialization. We monitor the minimum denominator modulus $\Delta_{\rm den}=\min|h_r(\theta)^2+s_r^2|$ across all hidden preactivations $h_r(\theta)$ and verify finiteness of all forward outputs, Jacobians, losses, and parameter gradients. In addition, the 2D and 10D routines verify the closed-form complex input derivatives \eqref{eq:cauchyprime}--\eqref{eq:fprime} against centered finite differences:
\begin{equation}
 D_j^{(h)}F(x)=\frac{F(x+hu_j)-F(x-hu_j)}{2h},\qquad h=10^{-6}.
 \label{eq:app-finite-difference}
\end{equation}
As shown in Table~\ref{tab:app-preflight}, the maximum discrepancy between our explicit analytical Jacobians and centered finite differences is on the order of $10^{-10}$ to $10^{-9}$, well within double-precision finite-difference truncation tolerances. Origin-anchor identities $\widehat\phi_0(0)=1$, $\widehat\phi_k(0)=m_k$, and normal-coordinate independence $\partial_{\theta_k}f_k=0$ hold to machine precision ($10^{-12}$ to $10^{-10}$).

\begin{table}[!htbp]
\centering\small
\caption{Finite-difference Jacobian verification checks at initialization.}
\label{tab:app-preflight}
\setlength{\tabcolsep}{5pt}
\begin{tabular}{llrr}
\toprule
Benchmark & Checked object & Tolerance & Maximum error \\
\midrule
2D product & Coordinate block & $2\times10^{-7}$ & $1.131\times10^{-9}$ \\
2D Harrison & Coordinate block & $2\times10^{-7}$ & $2.99\times10^{-9}$ \\
2D Harrison & Interaction block & $2\times10^{-7}$ & $3.577\times10^{-10}$ \\
10D non-product & Full log-transform vector & $1\times10^{-5}$ & $1.818\times10^{-10}$ \\
\bottomrule
\end{tabular}
\par\vspace{3pt}\begin{minipage}{\linewidth}\footnotesize All checks use centered finite differences with step $h=10^{-6}$ and report the maximum absolute discrepancy across all tested complex probes and derivative entries.\end{minipage}
\end{table}

\section{Evaluation and Inversion Diagnostics}
\label{app:inversion}

\subsection{Evaluation implementation}
\label{app:inversion-implementation}
Frozen endpoint weights supply $\widehat G_S(s)$ in \eqref{eq:learned-lt} to \texttt{mpmath.invertlaplace} with \texttt{method="talbot"} and an explicit degree $M\in\{5,8\}$. Although the caller initializes \texttt{mpmath.mp.dps} to 50 decimal digits in 2D and 10D and 100 decimal digits in 20D--50D, specifying an explicit \texttt{degree} $M$ causes \texttt{mpmath}\textquotesingle s \texttt{FixedTalbot} implementation to set its internal quadrature working precision from $M$ automatically, while all neural network queries are evaluated in machine \texttt{complex128}.

At zero, exact origin normalization $\widehat\phi_0(0)=1$ gives the removable singularity limit
\begin{equation}
 \widehat G_S(0)=-\sum_{j=1}^d\partial_j f_0(0).
 \label{eq:app-zero-limit}
\end{equation}
The exact product-form limit is $H_d=\sum_{j=1}^d 1/j$.

\subsection{Signed-error decomposition}
\label{app:signed-errors}
Let $p=p_{\rm ref}(t)>0$ be the reference tail probability, $p_M^\star=T_M[G_S](t)$ the output obtained by inverting the exact transform, and $\widehat p_M$ the corresponding learned-transform tail prediction. We define the signed relative error components in percent:
\begin{equation}
 r_{\rm inv}=100\frac{p_M^\star-p}{p},\qquad
 r_{\rm net}=100\frac{\widehat p_M-p_M^\star}{p},\qquad
 r_{\rm total}=r_{\rm inv}+r_{\rm net}=100\frac{\widehat p_M-p}{p}.
 \label{eq:app-signed-error}
\end{equation}
Pointwise relative error is $|r_{\rm total}|$, and $\MRE_{1:K}$ averages this quantity over $t\in\{1,\ldots,K\}$. Taking absolute values before summing the components would discard sign cancellation, where $r_{\rm net}$ measures the net effect of substituting the learned transform into the inversion pipeline.

\begin{table}[!htbp]
\centering\small
\caption{Signed relative-error decomposition at 20D ($t=8$, percent).}
\label{tab:app-signed-errors}
\setlength{\tabcolsep}{5pt}
\begin{tabular}{lrrrrr}
\toprule
Model & $M$ & $t$ & $r_{\rm inv}$ & $r_{\rm net}$ & $r_{\rm total}$ \\
\midrule
NN4MGF & 5 & 8 & -2.516 & +18.374 & +15.858 \\
HCN (32345) & 5 & 8 & -2.516 & -16.412 & -18.928 \\
NN4MGF & 8 & 8 & -0.447 & -28.582 & -29.028 \\
HCN (32345) & 8 & 8 & -0.447 & -8.037 & -8.484 \\
\bottomrule
\end{tabular}
\par\vspace{3pt}\begin{minipage}{\linewidth}\footnotesize Components are computed at full precision and rounded only for display, satisfying $r_{\rm total}=r_{\rm inv}+r_{\rm net}$.\end{minipage}
\end{table}

Table~\ref{tab:app-signed-errors} compares NN4MGF and HCN at the same tail threshold $t=8$ in 20 dimensions under Degrees~5 and~8. Under Degree~5 ($M=5$), the intrinsic quadrature error $r_{\rm inv}=-2.516\%$ has the opposite sign to NN4MGF's network error ($r_{\rm net}=+18.374\%$), partially canceling to yield a smaller total error ($+15.858\%$). By contrast, HCN (seed 32345) has a smaller network error magnitude ($|r_{\rm net}|=16.412\%$), but because its negative sign matches $r_{\rm inv}=-2.516\%$, the two components reinforce to give $-18.928\%$ (and similarly $-23.220\%$ for seed 22345). Switching to Degree~8 ($M=8$) at the exact same threshold $t=8$ reduces the intrinsic inversion error magnitude to $r_{\rm inv}=-0.447\%$, removing the artificial Degree-5 cancellation and revealing HCN's substantially lower total error ($-8.484\%$ for seed 32345 and $-14.963\%$ for seed 22345, compared with $-29.028\%$ for NN4MGF). For a linear quadrature rule $T_M[G]=\operatorname{Re}\sum_j w_j G(s_j)$, exact arithmetic gives
\begin{equation}
 |T_M[\widehat G]-T_M[G]|
 \le\sum_j|w_j|\,|\widehat G(s_j)-G(s_j)|,
\end{equation}
showing how quadrature weights $|w_j|$ and node locations $s_j$ govern the propagation of transform approximation errors across degrees.

\section{Supplementary Experimental Results}
\label{app:additional-results}

\subsection{Degree sensitivity at fixed endpoints}
\label{app:degree-sensitivity}
Table~\ref{tab:app-degree-sensitivity} pairs Degrees~5 and~8 at the fixed training endpoints across the 20D, 30D, and 50D product-form benchmarks. Within each pair, only the inversion degree changes; the trained network weights are identical. Moving from Degree~5 to Degree~8---which removes the large high-dimensional quadrature bias documented in Table~\ref{tab:exact-inversion}---reduces MRE across all three evaluation intervals ($1{:}5$, $1{:}8$, and $1{:}10$) for every HCN benchmark: the 20D HCN three-seed mean decreases from $2.983\%$ to $1.441\%$ on $1{:}5$, from $7.730\%$ to $7.632\%$ on $1{:}8$, and from $19.347\%$ to $12.264\%$ on $1{:}10$; the 30D HCN errors decrease from $3.429\%$, $10.107\%$, and $19.084\%$ to $1.683\%$, $6.546\%$, and $8.872\%$; and the 50D HCN errors decrease from $4.605\%$, $7.196\%$, and $13.182\%$ to $1.495\%$, $6.367\%$, and $8.011\%$. By contrast, for the archived 20D NN4MGF checkpoint, removing the Degree-5 sign cancellation increases $\MRE_{1:8}$ from $8.006\%$ to $8.148\%$ and increases $\MRE_{1:10}$ from $16.801\%$ to $18.908\%$.

\begin{table}[!htbp]
\centering\small
\caption{Fixed-endpoint MREs under Degrees 5 and 8 (percent).}
\label{tab:app-degree-sensitivity}
\setlength{\tabcolsep}{5pt}
\begin{tabular}{lrrrr}
\toprule
Model & $M$ & $1{:}5$ & $1{:}8$ & $1{:}10$ \\
\midrule
20D HCN (3-seed mean) & 5 & 2.983 & 7.730 & 19.347 \\
20D HCN (3-seed mean) & 8 & 1.441 & 7.632 & 12.264 \\
20D NN4MGF & 5 & 4.013 & 8.006 & 16.801 \\
20D NN4MGF & 8 & 2.802 & 8.148 & 18.908 \\
30D HCN & 5 & 3.429 & 10.107 & 19.084 \\
30D HCN & 8 & 1.683 & 6.546 & 8.872 \\
50D HCN & 5 & 4.605 & 7.196 & 13.182 \\
50D HCN & 8 & 1.495 & 6.367 & 8.011 \\
\bottomrule
\end{tabular}
\par\vspace{3pt}\begin{minipage}{\linewidth}\footnotesize Each degree pair evaluates the exact same trained network weights without retraining. The 20D HCN entries report the mean across the three trained seeds.\end{minipage}
\end{table}

\subsection{Late-training diagnostics in 50 dimensions}
\label{app:late-training}
Table~\ref{tab:app-late-training} supplements Section~\ref{sec:late-training} by reporting $\MRE_{1:5}$ and $\MRE_{1:8}$ across the five late-training checkpoints ($92{,}000$ to $100{,}000$ updates) of the 50D trajectory at Degree~8. Across these five checkpoints, $\MRE_{1:5}$ remains between $1.318\%$ and $1.611\%$, and $\MRE_{1:8}$ remains between $5.299\%$ and $7.155\%$ ($6.367\%$ at the fixed $100{,}000$-update endpoint).

\begin{table}[!htbp]
\centering\small
\caption{Late-training checkpoints along the 50D trajectory at Degree 8 (MRE, percent).}
\label{tab:app-late-training}
\setlength{\tabcolsep}{5pt}
\begin{tabular}{rrrl}
\toprule
Update & $1{:}5$ & $1{:}8$ & Role \\
\midrule
92,000 & 1.524 & 6.650 & Diagnostic checkpoint \\
94,000 & 1.611 & 5.299 & Diagnostic checkpoint \\
96,000 & 1.318 & 6.845 & Diagnostic checkpoint \\
98,000 & 1.473 & 7.155 & Diagnostic checkpoint \\
100,000 & 1.495 & 6.367 & Reported endpoint \\
\bottomrule
\end{tabular}
\par\vspace{3pt}\begin{minipage}{\linewidth}\footnotesize All five checkpoints from update 92,000 to 100,000 are evaluated under Degree 8. The main accuracy comparison uses the fixed 100,000-update endpoint.\end{minipage}
\end{table}

\subsection{The exploratory 10D non-product example}
\label{app:10d-reference}
In the 10-dimensional orthogonally reflected RBM \eqref{eq:10d-model}, orthogonal reflection ($R=I_{10}$) preserves the one-dimensional stationary marginal exponential laws despite correlated Brownian input:
\begin{equation}
 \mathbb P_\pi(Z_i>x)=e^{-ix},\quad
 \mathbb E_\pi[Z_i]=1/i,\quad \phi_0(su_i)=\frac{i}{i+s},\quad
 \mathbb E_\pi[S]=H_{10}=2.9289682540\ldots.
 \label{eq:app-10d-marginals}
\end{equation}
Evaluating the removable-limit derivative formula \eqref{eq:app-zero-limit} for the frozen HCN network gives a learned mean total workload of $2.9242988034$, a $0.159\%$ discrepancy relative to the exact analytical mean $H_{10}$. This checks the learned network's first moment at the origin separately from the sum-tail comparison in Table~\ref{tab:sec5-nonproduct}, where pointwise relative errors for both HCN and the independent-marginal comparator are reported against the pilot simulation estimates.

\begin{table}[H]
\centering\small
\caption{Ten-dimensional correlated non-product RBM at Degree 8.}
\label{tab:sec5-nonproduct}
\setlength{\tabcolsep}{4pt}
\begin{tabular}{rrrrrr}
\toprule
$t$ & Simulation ref. & HCN interaction & Independence & HCN RE (\%) & Indep.\ RE (\%) \\
\midrule
1 & 0.972040 & 1.000000 & 0.989814 & 2.88 & 1.83 \\
2 & 0.720910 & 0.722273 & 0.766398 & 0.19 & 6.31 \\
3 & 0.400300 & 0.398938 & 0.399920 & 0.34 & 0.10 \\
4 & 0.191240 & 0.187112 & 0.168775 & 2.16 & 11.75 \\
5 & 0.084430 & 0.086949 & 0.065373 & 2.98 & 22.57 \\
6 & 0.035760 & 0.039650 & 0.024513 & 10.88 & 31.45 \\
7 & 0.014645 & 0.015140 & 0.009081 & 3.38 & 37.99 \\
8 & 0.005905 & 0.005512 & 0.003350 & 6.66 & 43.28 \\
\midrule Mean ($1{:}8$) & --- & --- & --- & 3.68 & 19.41 \\
\bottomrule
\end{tabular}
\par\vspace{3pt}\begin{minipage}{\linewidth}\footnotesize HCN and independent-marginal comparator MREs relative to the pilot steady-state simulation reference are 3.684\% and 19.409\%, respectively. First-moment cross-validation against the exact analytical mean $\mathbb{E}[S]=H_{10}$ agrees to within 0.159\%.\end{minipage}
\end{table}

\section{Reproducibility and Computational Accounting}
\label{app:reproducibility}

\subsection{Experimental records and provenance}
\label{app:provenance}
In the companion reproducibility package linked below, archived evaluation records in \path{data/} and \path{audit/} store the complete pointwise predictions, hardware profiles, configurations, and SHA-256 checksums for every benchmark, seed, checkpoint, and Talbot degree reported in the paper. Baseline NN4MGF comparisons use the pointwise 2D and 30D tail predictions released in the official repository of Dai and Zhang (2026) (\url{https://github.com/zhangz73/NN4MGF}) and full-precision evaluations of the released 20D NN4MGF checkpoint. In the 30D product-form experiment, training proceeds for $50{,}000$ updates with learning-rate annealing at update $40{,}000$ under the audited 64M base-input budget.

\subsection{Software environment}
\label{app:software}
Training and evaluation runs on the workstation/laptop platform use Python 3.10.20, PyTorch 2.5.1+cu121, CUDA 12.1, and \texttt{mpmath} 1.3.0 on an NVIDIA GeForce RTX 4080 Laptop GPU, while the controlled 30-dimensional hardware resource profiles in Section~\ref{sec:performance} are executed on a common NVIDIA H100 NVL GPU. Complete environment metadata and preflight verification logs are archived in \path{provenance/implementation_details.json} and \path{provenance/implementation_checks/}.

\subsection{Resource and sampling-budget accounting}
\label{app:resource-accounting}
In the 30D H100 profile (Table~\ref{tab:profile}), given measured per-update times $\tau_j$ over 200 post-warmup updates, the mean update time is $\overline\tau=200^{-1}\sum_{j=1}^{200}\tau_j$ ($0.165113$~s for HCN and $1.601046$~s for NN4MGF, totaling $33.023$~s and $320.209$~s over 200 updates), and the projected full-schedule update time is $\widehat T=U\overline\tau$. Peak allocated and peak reserved GPU memory (in $\text{GiB}=2^{30}$ bytes) are recorded directly from CUDA memory allocator statistics reset immediately after the 10 warmup updates.

Under the base-input definition \eqref{eq:budget}, HCN processes $1{,}024+128+128=1{,}280$ base inputs per update over $U=50{,}000$ updates, whereas the released NN4MGF training code samples two batches of $16{,}384$ inputs per update over $U=300{,}000$ updates:
\begin{align}
 N_{\rm HCN}&=50{,}000\times(1{,}024+128+128)=64{,}000{,}000,\\
 N_{\rm NN4MGF}&=300{,}000\times(2\times16{,}384)=9{,}830{,}400{,}000.
\end{align}
Thus HCN's 64M cumulative base-input budget represents $0.651\%$ of the implemented NN4MGF code-path budget, and $1.302\%$ of the single-batch count ($300{,}000\times 16{,}384=4.9152\text{B}$) cited in the text of Dai and Zhang (2026).

\subsection{Code availability and reproduction scripts}
\label{app:regeneration}
The source code for Holomorphic Cauchy Networks—including the complex Cauchy network modules, exact origin-anchoring and boundary-exclusion layers, closed-form complex Jacobians, sequential backpropagation training loops, and Fixed-Talbot evaluation routines—is available at \url{https://github.com/KehanWang126/holomorphic-cauchy-networks-rbm}. In addition, the self-contained reproduction scripts listed in Table~\ref{tab:app-regeneration} verify all SHA-256 data checksums and regenerate every table, figure, and compiled PDF in the main text and appendices directly from the archived evaluation records.
\begin{table}[!htbp]
\centering\small
\caption{Entry points for verifying records and rebuilding all manuscript tables, figures, and PDFs.}
\label{tab:app-regeneration}
\begin{tabularx}{\linewidth}{p{.36\linewidth}>{\raggedright\arraybackslash}X}
\toprule
Task & Python entry point\\
\midrule
Verify SHA-256 hashes and summaries & \path{audit/audit_results.py}\\
Rebuild all main and supplement tables/figures & \path{build_results.py}, then \path{scripts/build_appendix_tables.py}\\
Compile main and supplement PDFs & \path{scripts/build_pdfs.py}\\
\bottomrule
\end{tabularx}
\end{table}

\FloatBarrier
\end{document}